\crefname{subsection}{subsection}{subsections}
\crefname{subsection}{subsection}{subsections}
\pgfplotsset{width=10cm,compat=1.9}
\theoremstyle{definition}
\newtheorem{Definition}{Definition}[section]
\theoremstyle{plain}
\newtheorem{Theorem}[Definition]{Theorem}
\newtheorem{Corollary}[Definition]{Corollary}
\newtheorem{Lemma}[Definition]{Lemma}
\theoremstyle{remark}
\newtheorem{Remark}[Definition]{Remark}
\theoremstyle{remark}
\newtheorem{Observation}[Definition]{Observation}
\numberwithin{equation}{section}
\title{Mathematical Algorithm Design for Deep Learning under Societal and Judicial Constraints: The Algorithmic Transparency Requirement}
\author[1,2,3]{Holger Boche}
\author[4]{Adalbert Fono}
\author[4,5]{Gitta Kutyniok}
\affil[1]{Institute of Theoretical Information Technology, TUM School of Computation, Information and Technology, Technical University of Munich, Germany}
\affil[2]{Munich Center for Quantum Science and Technology (MCQST), Munich, Germany}
\affil[3]{CASA – Cyber Security in the Age of Large-Scale Adversaries– Exzellenzcluster, Ruhr-Universität Bochum, Germany}
\affil[4]{Department of Mathematics, Ludwig-Maximilians-Universität München (LMU Munich), Germany}
\affil[5]{Munich Center for Machine Learning (MCML), Munich, Germany}
\date{}
\newcommand{\R}{\mathbb{R}}
\newcommand{\Q}{\mathbb{Q}}
\newcommand{\N}{\mathbb{N}}
\newcommand{\Z}{\mathbb{Z}}
\newcommand{\C}{\mathbb{C}}
\newcommand{\abs}[1]{\ensuremath{\left\lvert#1\right\rvert}}
\newcommand{\norm}[2][]{\ensuremath{\left\lVert#2\right\rVert_{#1}}}
\DeclareMathOperator*{\argmin}{arg\,min}
\begin{document}

\maketitle

\begin{abstract}
    Deep learning still has drawbacks in terms of trustworthiness, which describes a comprehensible, fair, safe, and reliable method. To mitigate the potential risk of AI, clear obligations associated to trustworthiness have been proposed via regulatory guidelines, e.g., in the European AI Act. Therefore, a central question is to what extent trustworthy deep learning can be realized. Establishing the described properties constituting trustworthiness requires that the factors influencing an algorithmic computation can be retraced, i.e., the algorithmic implementation is transparent.
    %In this paper, we focus on the notion of algorithmic transparency, i.e., a requirement that the factors influencing an algorithmic computation can be retraced. Algorithmic transparency is essential for establishing the described properties constituting trustworthiness.  
    %Therefore, we first collect basic principles formulating clear requirements with respect to trustworthiness.     
    %Motivated by the observation that the scaling and energy consumption of current deep learning models necessitates a change in computing technology, we derive a mathematical framework that takes into account the computing model to analyze whether a transparent implementation is feasible in a given scenario. 
    Motivated by the observation that the current evolution of deep learning models necessitates a change in computing technology, we derive a mathematical framework which enables us to analyze whether a transparent implementation  in a computing model is feasible.
    %Thus, it enables us to evaluate the generality of tasks still amenable to trustworthy solution approaches  in a given computing model.
    %Thus, it enables us to evaluate how general a task can be so that a trustworthy solution approach is still amenable in a given computing model.
    
    %Based on previous results, we exemplarily apply the derived trustworthiness framework for analyzing deep learning approaches for inverse problems in digital and analog computing models represented by Turing and Blum-Shub-Smale Machines, respectively. 
    We exemplarily apply our trustworthiness framework to analyze deep learning approaches for inverse problems in digital and analog computing models represented by Turing and Blum-Shub-Smale Machines, respectively. Based on previous results, we find that Blum-Shub-Smale Machines have the potential to establish trustworthy solvers for inverse problems under fairly general conditions, whereas Turing machines cannot guarantee trustworthiness to the same degree. %To which extent an analog machine can be implemented in practice that corresponds to the Blum-Shub-Smale model remains a crucial question.
\end{abstract}

\textbf{Keywords}: Deep Learning, Trustworthiness, Algorithmic Transparency, Turing Machines, Blum-Shub-Smale Machines

\section{Introduction}
The core idea of machine learning, that is, enable an algorithm to extract relevant information from an available data set to solve a given problem, coupled with an evolution of digital computing technology and power, led to a revolution in a wide range of applications \cite{Senior20DeepFold,He2015DelvingDI,Silver16Go}. Even more, by further augmenting the machine learning models and the data sets, great advances have been made via the deep learning framework in fields such as natural language processing, which were expected to be amenable to this approach to a lesser degree due to their inherent complexity \cite{Brown20GPT3,Lam2023GraphCast}. Deep learning \cite{LeCun15DL, Goodfellow16DL, Berner2021modernMathDL} refers to a specific class of machine learning models, so-called (deep) artificial neural networks \cite{McCulloch43NNs} that are adjusted and optimized on given training data via fairly simple loss functions and basic iterative methods such as stochastic gradient descent along with backpropagation \cite{Rumelhart86BP}.

Therefore, it is widely acknowledged that the success of deep learning can be attributed mainly to three pillars. First, the availability of vast amounts of data enabled the breakthrough of the deep learning approach by outperforming previous methods by a large margin \cite{Krizhevsky2012ImageNetCW}. Second, the advancements in digital computing hardware and the accompanied increase of computational power allowed for the effective processing of large date sets in the training phase resulting in larger and deeper networks that tend to be more capable. Hereby, the initial breakthrough relied on incorporating GPUs in the training of neural networks. By construction, GPUs are better equipped than purely CPU-powered computers to carry out the applied algorithms heavily depending on matrix multiplication operations \cite{Krizhevsky2012ImageNetCW, Pandey22GPU}. The process of increasing training data sets and computing power is still ongoing and cumulated at this stage in digital high performance computing approaches optimized for the implementation of deep learning \cite{silvano2023survey, Jouppi23TPU, Elster22TPU}. Third, the progress in neural network architecture from fully-connected feedforward over convolutional \cite{Krizhevsky2012ImageNetCW} and residual networks \cite{He2016ResidualDNNs} to transformers \cite{vaswani2017selfattention} as well as in training methodology, e.g., incorporating techniques such as self-supervision \cite{balestriero2023cookbook} and reinforcement learning (with human feedback) \cite{christiano2017deepHF}, transferred the potential benefits of larger training sets and more computing power into practical improvements. 

\subsection{Energy and Scaling Limitations of Deep Learning}
However, it is not clear how far the current approaches can be further scaled and improved under the deep learning framework. There are indications suggesting that this development may slow down or even halt \cite{Thompson21DLreturn, thompson2020computational}. The data sets employed to train state-of-the-art large language models are already including a noteworthy fraction of (English) text on the internet \cite{Brown20GPT3}.  %very large including for instance text scraped from Wikipedia, Twitter/X, and Reddit (REFs). 
Hence, even larger and suitable data bases need to be generated to train future models by combining different data types such as text, audio and video. Besides, the ongoing digitization of the physical world via sensors, which observe and perceive aspects of their environment -- think of autonomous vehicles for instance --, leads to an accumulation of additional data but also greater demands in storing and throughput capacity \cite{decadal_plan}. 

Therefore, to process the collected data and to apply algorithmic methods such as deep learning more computing capacity is required, i.e., the necessary number of computational steps increases. Since at present there exists a direct connection between the number of computational operations and the total energy consumption of a (digital) computing device \cite{Landauer61Limit,berut2012Limit}, it seems unlikely that the already immense energy consumption of the current deep learning models does not substantially increase in the future unless the applied techniques are structurally adjusted. Hence, dramatically more data and energy efficient methods have to be incorporated %(but the current trajectory does not support this development) 
or the underlying computing and processing paradigm needs to change so that more efficient but equally powerful computations can be carried out. One promising alternative to the present, purely digital computing approach is to incorporate analog devices in the computing pipeline since analog computing offers potential benefits if its fault tolerance can be increased \cite{decadal_plan, computing_plan}.

\subsection{The Need for Trustworthy Deep Learning}
Due to its ongoing advancements, the scope of tasks that can be successfully tackled by deep learning models is ever-increasing. On the one hand, existing models are tweaked so that they are able to adapt to similar tasks of the same complexity. On the other hand, new, more capable classes of models are introduced -- the last one being foundational models \cite{bommasani2022opportunities} such as large language models -- that are able to solve previously unattainable problems. Mainly, current models still impact their environment indirectly by influencing human decisions but not by direct control of the physical environment. The aforementioned large language models are a prime example of these interactions \cite{chang2023survey}. However, the increasing capabilities of deep learning models and the actual goal of artificial general intelligence \cite{Goertzel2014AGI,Roser2023AGItimeline} indicate that the type of interactions may change in the near future, e.g., autonomous vehicles with their sensing and decision-making powered by deep learning will act as physical agents and thereby cross the barrier from indirect to direct interactions with the physical environment. This is reflected by discussions on machine ethics in the context of autonomous driving, i.e., the questions of how algorithms should decide in certain situations \cite{Thornton17Ethics, Karnouskos20Ethics, Geisslinger21Ethics}. While formulating and agreeing on a decision-making framework for (autonomous) physical agents is undeniably important, the question of how to implement and guarantee abidance by the chosen framework is equally relevant.

The latter goal is complicated by the black box, unreliable, non-accountable, and non-robust behaviour of current deep learning models \cite{ras2023explainable, Szegedy14AdvEx, zhang2023sirens,  bastounis2021mathematics, Adcock20gap}. A well-known failure in this regard is the non-robustness of artificial neural networks towards minimal input perturbations, which entails non-reliability on the network's output \cite{Szegedy14AdvEx, Ilyas19AdvExNotBugs, Carlini18AudioAdvEx, Tsipras18RobustnessOdds, Antun2020InstabilitiesDL, Moosavi16DeepFool}. These drawbacks can be summarized by a lack of trustworthiness \cite{BOULEMTAFES20Privacy, He2022Security, Liu21Privacy, Mireshghallah20PrivacySurvey, Willers20Safety} -- an umbrella term for privacy, security, resilience, reliability, and accountability \cite{Fettweis2022Trustworthiness}. Thereby, the notion of trustworthiness includes amongst others aspect such as 
\begin{itemize}
    \item robustness, i.e., resilience against a variety of challenges: changing environment or situations, noisy or incomplete data, and adversarial attacks;
    \item transparency and interpretability, i.e., clear justification and explanation of the decision-making process; 
    \item fairness, ethical compliance, and privacy, i.e., avoidance of biases, equitable treatment of diverse user groups, and secure management of sensitive information;
    \item safety and security, i.e., protection against potential threats and preventing unintended harmful consequences. 
\end{itemize}
Failing to establish trustworthiness in deep learning systems entails that no performance guarantees can be provided or, at least, circumstances may arise in which deep learning systems exhibit unexpected and potentially harmful behaviour. This fact is well-acknowledged, even beyond the science community, since the present and future deep learning applications are expected to impact all of society. Therefore, policy makers already proposed guidelines and regulations that deep learning models need to satisfy. Among the most influential ones are certainly the European AI Act \cite{WebsiteEuCom} and the G7 Hiroshima Leaders Communiqué \cite{CommuniqueG7}, which describe various degrees of requirements and demands with respect to trustworthiness of deep learning systems. In particular, the European AI Act formulates a clear legal framework that might act as a blueprint for further regulation proposals. This raises the question to which extent trustworthiness can be achieved with the deep learning approach. 
The lack of trustworthiness in deep learning is a persistent issue throughout its evolution since the fundamental approach remained unchanged. This is in contrast to other AI methods such as expert systems \cite{Tan2017Expert}, which by design offer trustworthiness benefits, but fail to reach the capabilities offered by deep learning in other areas. Thus, an open problem is whether an adaptation of key components of deep learning may change the trajectory of trustworthiness. 

\subsection{Our Contributions}
Since the remedy of the energy concerns may require the introduction and integration of novel computing technologies, we assess theoretical computational requirements to establish trustworthy deep learning models. For this purpose, different angles need to be considered. 

First, trustworthiness lacks a universally acknowledged formal definition. Therefore, in \Cref{SubSec:TrAI} abstract principles and potential legal structures based on the introduced aspects of trustworthiness are discussed. Thereby, clear requirements on trustworthiness with focus on the transparency condition are posed.  

Second, the capabilities and limits of algorithms with deep learning models being a specific type can only be evaluated with respect to the hardware/computation paradigm. Consequently, we need to take into account the underlying computing model. To that end, in \Cref{SubSec:AlgSol} we present two different computing models -- digital and analog -- and their respective promise for energy efficient performance. By considering idealized mathematical abstractions illustrating the core idea of the computing approaches, we describe conditions that guarantee trustworthy deep learning implementations, i.e., we convert the introduced non-formal trustworthiness conditions into a mathematical framework. 

Third, trustworthiness in deep learning is a very broad topic. Hence, we restrict to a particular use case -- inverse problems -- which is specific enough to allow for a formal treatment but also enables us to draw more general conclusions. We define the considered inverse problem setting and the associated deep learning solvers in \Cref{SubSec:IPandDL}.   

Subsequently, \Cref{Sec:AS_IP} applies the derived framework of trustworthy computations in the inverse problem use case. Hereby, we rely on the findings in \cite{Boche2022LimitsDL} and \cite{Boche2022InvProb}. The results imply that digital and analog computing have diverging capabilities to enable trustworthy algorithms (and thereby deep learning systems): Digital computing modeled by Turing machines \cite{Turing36Entscheidung} has certain limitations that potentially can be avoided by analog computing modeled by Blum-Shub-Smale (BSS) machines \cite{Blum89BSSmachines}. %Finally, in \Cref{Sec:Concl} the transfer of the theoretical findings into practice is analyzed and the capacity of trustworthy real-world implementations is discussed.  

%\tc{adjust outline; and state to what degree the paper is a survey}
\subsection{Limitations}\label{SubSec:Limit}
Can we transfer the observations with regard to trustworthy algorithms from the field of inverse problems to a broader class of deep learning applications? Although each application necessitates an in-depth consideration of its own, we can formulate some general conclusions. The existence of a trustworthy algorithm solving a task may depend on the underlying computing model and different outcomes may indeed arise. In particular, digital computing is not always the optimal choice to achieve trustworthiness, whereas analog computing may enhance the capacity in theory. The decisive question is, whether analog computing that translates theoretical into practical benefits can be realized. We analyzed analog computing under the BSS model, but there is no universally accepted mathematical model precisely describing analog computations (as with the Turing model for digital computations). Hence, further research is required to establish appropriate theoretical models that include, for instance, error correction and approximate analog computing, i.e., computing models that can trade energy and computing time with accuracy (presumably how biological brains operate) \cite{Ulmann2022Analog, Haensch19Analog, Hamerly22Analog, miscuglio2021approximate}. %\tc{This is additionally highlighted by the fact that a shift from purely digital information processing (and especially computing) to novel approaches also comprising analog techniques seems inevitable due to demands on energy efficiency and data throughput \cite{decadal_plan}; an emerging example for the changing paradigm is given by neuromorphic computing \cite{Christensen2022NCSurvey}. Therefore, the importance of understanding and assessing, in which circumstances a given computing approach is favourable, is even enhanced.}    

The potential of a trustworthy algorithmic computation is evaluated via the notion of algorithmic solvability, which describes a correct, reliable and accountable method to solve a given problem; see \Cref{Sec:AS_IP}. 
%The underlying motivation to study the notion of algorithmic solvability of problems is to evaluate the potential of a trustworthy algorithmic computation. By confirming algorithmic solvability, the existence of a correct, reliable and accountable method to solve a given problem is demonstrated. 
Thus, algorithmic solvability may also provide a basis for verifying the abidance of legal requirements, in particular in the field of deep learning. We demonstrate that algorithmic solvability strongly depends on the specific problem formulation so that it may guide us towards descriptions that in principle allow for trustworthy solutions. At the same time, intricate and diverse real world tasks that require advanced and rather general solution techniques may not be amenable to the notion of algorithmic solvability. Either the generality of the tasks prevents algorithmic solvability directly or situations may arise in which decisions have to be made under incomplete or uncertain information so that there is no clear assignment of right and wrong behavior. The latter cases would be difficult to transfer to the introduced formalism and consequently different tools may be necessary to derive statements about trustworthy algorithmic solutions. 

Similarly, algorithmic solvability is certainly not the only approach to determine the trustworthiness of algorithmic methods. The utilized notion of algorithmic solvability and the underlying computability concepts foremost guarantee the (technological) integrity of a system \cite{Fettweis2022Trustworthiness}, which refers to a state in which the system in question resides within its specified margin of operation. Thus, the utilized computing machine does not (inadvertently) interfere with and influence the expected outcome of the performed computation. Therefore, adherence to a framework provided by legal regulations may be ensured via algorithmic solvability. Yet, in principle, the use of AI systems is feasible without adhering to algorithmic solvability if the arising drawbacks are acknowledged. In addition, if human intelligence is understood as an algorithm, it presumably may not fully guarantee algorithmic solvability and associated trustworthiness properties since it is subject to instabilities such as cognitive biases. As a result, a scenario is conceivable in which AI systems prevail without ensuring the aforementioned principles, for example by leading to better results on average than purely human intelligence. A concrete example is autonomous driving, which could prevail as soon as the expectation is reached that autonomous vehicles could improve accident statistics. Nevertheless, the outlined scenario exhibits a severe lack of trustworthiness, which may cause liability issues in case of accidents and ultimately may prevent its occurrence.

\subsection{Potential Impact and Extensions}
In this paper, we propose a mathematical framework for algorithmic trustworthiness considerations based on computability theory, i.e., an approach to assess the possibility of a transparent algorithmic implementation of a problem given a computing model. Although, it is not a universally applicable framework, it provides a step forward in the trustworthiness analysis of deep learning in the following sense: On the one hand, it allows to assess whether transparency may in principle be attainable in a given scenario. Hence, further analysis can evaluate a potential trustworthy implementation or the seriousness of the lack of trustworthiness in a specific scenario needs to be considered. On the other hand, a trustworthiness analysis may steer the implementation of a problem by adapting associated parameters such as computing platform, expected accuracy, and generality so that a potentially trustworthy implementation becomes feasible. This approach can be used in a detail-oriented setting for a specific task as well as on a larger scale that asks for promising directions for establishing trustworthiness in a broader scope.

The analysis in the inverse problem use case indicates that a trustworthy solver either presupposes a sufficiently narrow problem description or if a more general solver is envisioned, then computing capacities beyond digital computing may be necessary or certain limits need to be accepted, i.e., a provable trustworthiness certificate in the introduced framework is not feasible. Thus, our analysis enforces the observation that a shift from purely digital information processing (and especially computing) to novel approaches also comprising analog techniques seems inevitable -- not only due to demands on energy efficiency and data throughput \cite{decadal_plan} but also from the trustworthiness perspective. An emerging example for the changing paradigm is given by neuromorphic computing \cite{Christensen2022NCSurvey} with promising energy and processing gains, whereas a trustworthiness analysis based on adequate computing models is still pending.

\section{Trustworthiness Framework}
\subsection{Societal and Judicial Requirements}\label{SubSec:TrAI}
Trustworthiness is a multifaceted property, where the individual features are partially overlapping and entail one another. Yet, it provides a comprehensive description of qualities potential guidelines may require. Hence, trustworthiness (and the implied abidance by an approved decision-making framework) can be seen as a prerequisite to implement and operate deep leaning systems in certain scenarios, including safety-critical and high-leverage settings with direct influence on the physical environment such as autonomous driving. Due to a lack of technical assurances, abstract principles, codes of conduct, and legal regulations have been proposed such as Algorithmic Transparency, Algorithmic Accountability, and Right to Explanation for technology assessment in the context of trustworthiness \cite{RF1_2023, RF2_2023}: 
\begin{itemize}
    \item Algorithmic Transparency (AgT) refers to the requirement of the factors determining the result of an algorithm-based decision being visible to legislator, operator, user, and other affected individuals.
    \item Algorithmic Accountability (AgA) refers to the question of which party, individual, or possibly system is to be held accountable for harm or losses resulting from algorithm-based decisions, particularly those that are deemed to be faulty.
    \item Right to Explanation (RtE) refers to the right of an individual that is affected by an algorithm-based decision to know the entirety of relevant factors and their specific expression that lead to the decision. 
\end{itemize}
%At present, there exists no widely accepted technological characterization of these notions and they may not capture all relevant nuances discussed in social, judicial and political science, but they do guarantee or at least approach aspects of trustworthiness covered by robustness, interpretability, fairness, safety, etc. 
These notions may not capture all relevant nuances discussed in social, judicial, and political science, but they do guarantee or at least approach aspects of trustworthiness covered by robustness, interpretability, fairness, safety, etc. However, at present, there exists no widely accepted technological characterization in form of standards and specifications. The need is accentuated by the described fact that existing deep learning techniques do not result in models which satisfy AgT, AgA, RtE and perform sufficiently well simultaneously. Therefore, the question is whether future methods can abide by these (potential) regulations, and if a positive answer is found to what degree. That is, can we expect future methods to solve trustworthiness issues in a broader context? In particular, are certain aspects that result in non-trustworthy behaviour of deep learning models structurally inherent to the approach or can they be avoided by suitable adaptations? By introducing formal (technical) requirements describing AgT, AgA, and RtE, we can make first strides in answering these questions. Thereby, we focus on AgT since RtE can be considered as a direct application of AgT. Moreover, AgA is not possible without a clear understanding of the algorithm-based decision making provided by AgT. Therefore, AgT is the backbone of the introduced trustworthiness notions. 

An often neglected fact when discussing technological standards is the interplay between hardware platforms for computing, such as digital, neuromorphic, and quantum hardware, and the implemented algorithms representing the software side. Deep learning -- in essence just a specific type of algorithm -- needs to be expressed by a set of instructions in a (programming) language associated to the utilized hardware. Consequently, the capabilities of a given implementation also hinge on the power of the programming language, respectively the employed hardware. Thus, we analyze the potential impact of the hardware platform on trustworthy outcomes characterized by AgT based on two abstract computing models, namely the Turing model \cite{Turing36Entscheidung} for digital computing and the Blum-Shub-Smale (BSS) model \cite{Blum98ComplRealComp} for (idealized) analog computing.

\subsection{Transparency Condition}\label{SubSec:AlgSol}
An algorithmic computation of a problem provides an explicit and reliable approach that is guaranteed to (or clearly describes the degree and the circumstances under which it) succeed(s). In mathematical terms, an algorithm is a set of instructions that operate under the premises of some formal language characterizing the tackled problem. Thus, the specific definition of an algorithm depends on the considered formal language and the underlying computing model, e.g., digital and analog computations, which will be formally introduced in \Cref{subsubsec:dig} and \Cref{subsubsec:ana}, respectively. A real-world physical problem can be translated into a mathematical model that describes its domain with the feasible inputs, outputs and operations of a potential algorithm. Hence, independent of the individual algorithmic steps, we can describe the abstract input-output relation characterized by a function on the identified domain that the algorithm needs to realize. Note that, in general, the domain of the algorithm may differ from the domain of the mathematical model of the physical problem; this behavior occurs, for example, in digital computing. By interpreting the mathematical model as a function describing the input-output relation of the problem, we can rely on the following notion of the realization of an algorithm tackling problems on continuous quantities described by real numbers.  
\begin{Definition}\label{def:realAlg}
    Given a problem described by the input-output relation of a function $f: \R^m \to \R^n$, an \textit{algorithm} $\mathcal{A}$ computing $f$ \textit{realizes a mapping} $\mathcal{A}_f: \R^m \to \R^n$ with $\mathcal{A}_f=f$.  % evtl hier noch beschreiben auf welche Domäne Gleichheit gelten soll!
\end{Definition}
\begin{Remark}
    The realization of an algorithm is particularly important for digital computing, where the computations are performed on representations of real numbers that constitute a specific subset of real numbers; see \Cref{subsubsec:dig} for more details.
\end{Remark}
%\begin{Definition}\label{def:realAlg}\color{blue}
%    Given a problem described by the input-output relation of a function $f: \R^m \to \R^n$, an algorithm $\mathcal{A}$ computing $f$ realizes a family of mapping $\mathcal{F}_Af_\mathcal{A}: \R^m \to \R^n$.  
%\end{Definition}
% !!!!!!!!!!!!!!!!!!!!!! Problem: Mapping not unique in Turing model !!!!!!!!!!!!!!!!!!!!!!!!! Need adapted definition: Maybe set of functions?!
Subsequently, we will discuss the relevance of the realization of algorithms with respect to AgT. The mathematical model (respectively the derived function) may act as the ground truth and serve as the foundation for the trustworthy requirements introduced in \Cref{SubSec:TrAI}. In particular, the mathematical model provides a basis to assess AgT by identifying the factors influencing the underlying problem. % and thereby the admissible factors influencing the algorithmic computations.
Factors outside the mathematical model may not impact the algorithmic computation and outcome, since the transparency of the algorithmic computation can then no longer be guaranteed. Although these considerations may seem trivial, the implementation of the algorithm on a given computing platform, which is subject to mathematical modeling itself, adds another layer of complexity. To execute the algorithm on a suitable hardware platform, the abstract problem, respectively the corresponding mathematical model, needs to be translated into a machine-readable language. %, i.e., the input and output attributes as well as parameters need to be represented adequately on the given hardware. 
However, the mathematical model of the problem and the mathematical model of the computing platform do not necessarily agree. % with respect to their input and output descriptions. 
Thus, the input and output expressions must be unambiguously translatable between these two systems to guarantee a proper implementation of the algorithm abiding AgT. 

To illustrate this issue consider the machine-readable description of a real number such as $\pi$ in the digital computing model.
Due to its infinite binary expansion, $\pi$ can only be represented by finite algorithms that approximate it to any desired precision on digital computers. Despite the difference in description as a mathematical entity and a machine-readable object, we can identify both representations as equally valid and translatable, yet not unambiguous, since a real number may have multiple -- equally valid -- machine-readable descriptions. Besides, it is well-known that not all irrational numbers possess an unequivocal digital representation \cite{Turing36Entscheidung}. Consequently, the properties of the applied computing device need to be taken into consideration when the feasibility of an algorithmic computation as well as its trustworthiness is evaluated.

%In particular, the properties of the hardware may not influence the outcome of the algorithm.

%Although in a mathematical model $\pi$ typically acts as an entity not all of its digits can be stored in the memory of a digital computer since it is an irrational number with an infinite binary expansion. Nevertheless, $\pi$ can be represented on digital computers by finite algorithms that can approximate $\pi$ to any desired precision, i.e., on input of a natural number $n$ the algorithm returns exactly $n$ digits of $\pi$. 
%Whereas each machine-readable object by construction defines a unique mathematical entity, i.e., a real number, the reverse direction does not hold. A real number may have multiple -- equally valid -- machine-readable descriptions.
 
% \begin{Definition}\label{def:transAlg}
%     Given a problem described by the input-output relation of a function $f: \R^m \to \R^n$, an \textit{algorithmic implementation} of $f$ is \textit{transparent} in a given computing model if the realization $\mathcal{A}_f$ of an algorithm $\mathcal{A}$ is not altered by its implementation in the computing model. 
% \end{Definition}
\begin{Definition}\label{def:transAlg}
    An \textit{algorithmic implementation} is \textit{transparent} in a given computing model if the realization $\mathcal{A}_f$ of some function $f: \R^m \to \R^n$ by an algorithm $\mathcal{A}$ is not altered by its implementation in the computing model. We then say that $f$ allows for a \textit{transparent algorithmic implementation} in the given computing model.
\end{Definition}
\begin{Remark}
    In general, if a real-world problem $P$ can be expressed as a function $f$, then the input domain of $f$ necessarily represents every factor determining the outcome of $P$. Hence, a transparent algorithmic implementation of a closed-form expression of $f$ guarantees that $P$ can be solved by an algorithm abiding AgT in the considered computing model since the realization of the algorithm solely relies on the factors constituting the problem. Consequently, the algorithmic decision making can in principle be retraced so that providing AgT is feasible. However, in deep learning a potential solver for $P$ is sought based on a (imperfect) data set describing $P$ so that $f$ may not posses a closed-form expression or may be unknown, e.g., not all relevant factors determining $P$ are identified. Thus, the contribution of the individual factors affecting the solver may not be apparent due to the black box behaviour of deep learning so that assessing AgT requires a successive analysis, which presupposes a transparent algorithmic implementation. Therefore, a transparent algorithmic implementation can be seen as a minimum requirement to obtain AgT; the actual conditions depend on the chosen computing model and its formalization of an algorithm. For instance, in the digital case transparency requires the algorithm to be independent of the specific representation of a real number. 
    %\tc{EVTL BEISPIEL KREDITVERGABE}
\end{Remark}
Just constructing a transparent algorithm is not sufficient, since besides being comprehensible we also expect the algorithm to deliver a solution of the considered problem. Therefore, it is immanent to maintain the integrity between the mathematical model of the problem and the applied algorithm, i.e., the algorithmic solution needs to unequivocally reflect the `true' solution of the mathematical model.
\begin{Observation}\label{obs:AgT}
    To attain AgT and provide the correct output, an algorithm must reside within its specified margin of operation, i.e., the algorithm preserves the input-output relation of the underlying problem specified by a mathematical model (respectively the derived function).  
\end{Observation}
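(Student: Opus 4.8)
The plan is to treat this as a direct consequence of the two definitions just established — realization (\Cref{def:realAlg}) and transparency (\Cref{def:transAlg}) — together with the operational reading of AgT from \Cref{SubSec:TrAI}. Because the statement couples two demands, attaining AgT and producing the correct output, I would establish each separately and then combine them to force the implemented algorithm to preserve the input--output relation $\mathcal{A}_f = f$ on its domain of validity.

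First I would make the correctness requirement precise. By \Cref{def:realAlg}, an algorithm $\mathcal{A}$ that correctly computes the problem $f \colon \R^m \to \R^n$ realizes a mapping $\mathcal{A}_f$ with $\mathcal{A}_f = f$; this is exactly the demand that the algorithmic output coincide with the ground-truth solution prescribed by the mathematical model. Hence correctness alone already identifies the input--output relation of $\mathcal{A}$ with that of $f$, at the abstract level at which the algorithm is specified.

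Next I would address why the implementation cannot be ignored. The subtlety, flagged in the preceding discussion, is that the domain on which the algorithm actually runs may differ from the domain of $f$ — most notably in the digital model, where real inputs are replaced by machine-readable approximations that need not be unique. Here AgT, which requires every factor determining the outcome to be visible and retraceable, rules out any dependence of the output on artifacts of the implementation (for instance, the particular representation chosen for a real number): were the output to vary with such a hidden factor, that factor would itself have to be exposed to satisfy AgT, contradicting that the determining factors are precisely the inputs of $f$. This is exactly the transparency condition of \Cref{def:transAlg}, namely that $\mathcal{A}_f$ is not altered by its implementation in the computing model.

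Finally I would combine the two: correctness fixes $\mathcal{A}_f = f$ at the model level, and transparency guarantees that the implementation does not perturb $\mathcal{A}_f$, so the implemented map still agrees with $f$ wherever the algorithm is defined — which is what it means for the algorithm to reside within its specified margin of operation. The main obstacle I anticipate is conceptual rather than computational: pinning down \emph{margin of operation} and \emph{visible factors} precisely enough that these steps become genuine implications rather than restatements of the definitions. The digital case in particular requires care, since one must argue that representation-independence is both necessary for AgT and, together with correctness, sufficient to certify that the input--output relation is preserved; the BSS setting is cleaner, as inputs are manipulated as exact reals and the gap between the model domain and the algorithm domain largely disappears.
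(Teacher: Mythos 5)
Your proposal is correct and takes essentially the same route as the paper: the Observation carries no formal proof there (it is a remark-style synthesis of the preceding discussion), and your decomposition into correctness as realization ($\mathcal{A}_f = f$ via \Cref{def:realAlg}) combined with AgT-enforced implementation-independence (\Cref{def:transAlg}) is exactly that synthesis made explicit. Your closing caveat---that ``margin of operation'' and ``visible factors'' resist formalization, so the steps risk being restatements of definitions---is also consistent with why the paper frames this as an Observation rather than a theorem.
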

In the subsequent analysis, our focus does not reside on practical limitations of real-world hardware related to computation time, memory and energy consumption. Via mathematical models of the considered computing paradigm -- the most prominent one certainly being digital computations --, we study the possibility of theoretical guarantees with respect to trustworthiness.  

%but on theoretical guarantees and capabilities with respect to trustworthiness studied via abstract mathematical models of the considered computing paradigm -- the most prominent one certainly being digital computations. 

\subsubsection{Digital Computations}\label{subsubsec:dig}

The concept of digital machines is encapsulated by the mathematical model of Turing machines \cite{Turing36Entscheidung}. In fact, the widely accepted Church-Turing Thesis \cite{Copeland2020CTT} implies that Turing machines are a definitive model of digital computers, describing their (theoretical) capabilities perfectly. Thus, Turing machines provide a framework for analyzing digital computations by taking into account its inevitable approximate behaviour with respect to irrational numbers. More exactly, Turing machines introduce a notion of effective computations in a finite number of steps on real numbers. Thereby effectiveness refers to the condition that a Turing machines not only computes an approximate solution but also guarantees that the solution is within some previously prescribed error bound, which can be chosen arbitrarily small. Hence, the reliability and correctness of an obtained algorithmic solution is guaranteed by design.

Next, we shortly formalize effective computations via \textit{recursive functions} \cite{Kleene36Recursive}, which constitute a special subset of the set $\bigcup_{n=0}^\infty \{f : \N^n \hookrightarrow \N \}$, where '$\hookrightarrow$' denotes a partial mapping. Recursive functions coincide with the functions $f : \N^n \hookrightarrow \N$ that are computable by Turing machines, i.e., there exists a Turing machine that accepts input $x \in \N^n$ only if $f(x)$ is defined, and, upon acceptance, computes $f(x)$ \cite{Turing37Equivalence}. 
\begin{Lemma}
    A function $f : \N^n \hookrightarrow \N$ is a recursive function if and only if it is computable by a Turing machine. 
\end{Lemma}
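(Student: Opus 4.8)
The plan is to prove both implications, recalling that the class of \emph{recursive} (partial recursive) functions is the smallest class of partial functions $\N^n \hookrightarrow \N$ containing the basic functions---the constant zero function, the successor, and the projections---and closed under composition, primitive recursion, and the unbounded minimization ($\mu$) operator. The proof therefore has an inductive forward direction and an arithmetizing converse direction.

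For the direction that every recursive function is Turing-computable, I would argue by structural induction on this inductive definition. The base case amounts to exhibiting explicit Turing machines computing zero, successor, and projection, which is routine. For the inductive step I would show that the Turing-computable functions are closed under the three constructors: composition is handled by sequentially running the machines for the inner functions on auxiliary tape tracks and then feeding the results to the machine for the outer function; primitive recursion is handled by a machine that maintains a counter and iterates the recursion step; and minimization is handled by a machine that searches $y = 0, 1, 2, \dots$ and halts at the least $y$ (if any) making the guard vanish, with divergence of the search corresponding exactly to the partiality introduced by $\mu$.

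For the converse---every Turing-computable function is recursive---I would arithmetize computations. Fixing a machine $M$, I would G\"odel-encode each configuration (state, head position, and the finite tape content) as a single natural number using a primitive recursive sequence-coding scheme, e.g.\ via the G\"odel $\beta$-function. The crux is to verify that the one-step transition map on codes is primitive recursive: it merely decodes the current configuration, applies the finitely many transition rules of $M$, and re-encodes. From this, the $t$-step configuration is primitive recursive in the input and in $t$ by iteration, and Kleene's predicate ``$M$ on input $x$ has halted by step $t$'' is primitive recursive. The output function is then obtained by applying $\mu$ to locate the least halting step and reading off the result from the resulting configuration, yielding a partial recursive function by construction.

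The main obstacle is the arithmetization in the converse direction: encoding the unbounded (but always finite) tape as a natural number and proving that updating and re-encoding a configuration is primitive recursive. This hinges entirely on having a sequence-coding mechanism whose coding and decoding operations are themselves primitive recursive, so establishing such a coding---and the primitive recursiveness of the step relation built on top of it---is where the real work lies. By comparison, the forward induction and the concluding $\mu$-minimization are comparatively mechanical.
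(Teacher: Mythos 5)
Your proposal is correct and is the standard Turing--Kleene equivalence argument: structural induction over the basic functions and the three closure operators for one direction, and G\"odel arithmetization of configurations with a primitive recursive step relation followed by a single application of $\mu$-minimization (Kleene normal form) for the converse. The paper does not prove this lemma itself but simply cites Turing's 1937 equivalence result, and your sketch is precisely the classical argument underlying that citation, so there is nothing to fault beyond the (expected) omission of routine coding details.
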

Using recursive functions, we can identify the set of effectively computable real numbers. By introducing machine-readable description of real numbers, which was exemplarily demonstrated for $\pi$ in \Cref{SubSec:AlgSol} and can be formally expressed via recursive functions \cite{Kleene36Recursive}, one can construct sequences of rational numbers converging (with error control) to the actually considered real numbers. In this way, the computations performed by Turing machines are reduced to the rational domain, where exact computations are feasible, but at the same time the accumulated error due to the non-exact representation of real numbers is controlled. 
\begin{Definition}\label{def:CompSeqRat}
    A sequence $(r_k)_{k \in \N}\subset \Q$ of \textit{rational numbers} is \textit{computable}, if there exist three recursive functions $a, b, s : \N \to \N$ such that $b(k) \neq 0$ and
    \begin{equation*} 
        r_k = (-1)^{s(k)} \frac{a(k)}{b(k)} \qquad \text{ for all } k \in \N. 
    \end{equation*}
     A \textit{real number} $x\in\R$ is \textit{computable}, if there exists a computable sequence $(r_k)_{k \in \N}$ of rationals such that
    \begin{equation*} 
       \abs{r_k - x} \leq 2^{-k} \qquad \text{ for all } k \in \N.
    \end{equation*}
    We refer to the sequence $(r_k)_{k \in \N}$ as a \textit{representation} for $x$.
\end{Definition}
\begin{Remark}
    The definition can be straightforwardly extended to vectors and complex numbers by considering each component and part individually, respectively.
\end{Remark}
Having established a formal notion of an algorithm via Turing machines, we can specify the characterization of a transparent algorithm from \Cref{def:transAlg}. The key observation is that any Turing machine strictly operates on rational representations although the tackled problem may reside in the real domain, i.e., Turing machines map input representations to (computable) sequences. Hence, we can identify a Turing machine TM with an associated mapping $\Psi_{\text{TM}}: R \to S$, where $R$ and $S$ denote the representation space and the space of (computable) sequences, respectively, defined as
% \begin{align*}
%     R &\coloneqq\{ (r_k)_{k \in \N} : (r_k)_{k \in \N} \text{ is a representation of some } x \in \R\}  \quad \text{and} \\
%     S &\coloneqq \{ (s_k)_{k \in \N} : (s_k)_{k \in \N}\subset \Q \text{ is sequence computed by a Turing machine on input} \\
%     &\qquad\qquad\text{of a representation of some } x \in \R\}.
% \end{align*}
\begin{equation*}
    R \coloneqq\{ (r_k)_{k \in \N} : (r_k)_{k \in \N} \text{ is a representation of some } x \in \R\} 
\end{equation*}
and
\begin{align*}
    S &\coloneqq \{ (s_k)_{k \in \N} : (s_k)_{k \in \N}\subset \Q \text{ is sequence computed by a Turing machine on input} \\
    &\qquad\qquad\qquad\text{of a representation of some } x \in \R\}.
\end{align*}
%$R,S$ denote the representation space $\{ (r_k)_{k \in \N} : (r_k)_{k \in \N} \text{ is a representation of some } x \in \R\}$ and the space of (computable) sequences $\{ (s_k)_{k \in \N} : (s_k)_{k \in \N}\subset \Q \text{ is sequence computed by a Turing machine on input of a representation of some } x \in \R\}$, respectively.
However, the transfer between the different domains -- the representation space and the real numbers -- shall not impact the algorithmic computation in order to preserve transparency.
% \begin{Definition}
%     Given a problem described by the input-output relation of a function $f: \R^m \to \R^n$, an \textit{algorithmic implementation} of $f$ is \textit{transparent in the Turing model} if the algorithm (i.e., the associated Turing machine) $\mathcal{A}$ operates consistently on every input representation in the following sense: For any representation of an input instance $x \in \R^m$ the Turing machine computes a representation of $\mathcal{A}_f(x)$ so that the realization exclusively depends on $x$ (but not on its representation and other potential factors). 
% \end{Definition}
\begin{Definition}
     An \textit{algorithmic implementation} of a problem in the real domain is \textit{transparent in the Turing model} if the associated Turing machine TM operates consistently on every input in the following sense: For any two representations $(r^1_k)_{k \in \N}, (r^2_k)_{k \in \N}$ of an input instance $x \in \R^m$, the output sequences $\Psi_{\text{TM}}((r_k^1)_{k \in \N})$ and $\Psi_{\text{TM}}((r_k^2)_{k \in \N})$ encode the same outcome in the underlying (real) problem domain, i.e., the outcome exclusively depends on $x$ but not on its specific representation and other factors.
%     \tc{make more technical?! Not sure if it makes sense}
 \end{Definition}
% \begin{Definition}
%     An \textit{algorithmic implementation} is \textit{transparent in the Turing model} if the algorithm $\mathcal{A}$ with realization $\mathcal{A}_f: \R^m \to \R^n$ associated with a Turing machine TM operates consistently on every input representation in the following sense: For any representation $(r_k)_{k \in \N}$ of an input instance $x \in \R^m$, $\Psi_{\text{TM}}((r_k)_{k \in \N})$ is a representation of $\mathcal{A}_f(x)$, i.e., $\mathcal{A}_f(x)$ exclusively depends on $x$ (but not on its representation and other potential factors).
% \end{Definition}
\begin{Remark}
    Consider a toy example in which an algorithm $\mathcal{A}$ realizes the real-valued function $f(x) = ax +b$ for some constants $a,b \in \R$. Then transparency requires that the implementation of $\mathcal{A}$ in the Turing model is not only independent of the representation of inputs $x\in \R$, but additionally independent of the representation of the constants $a,b$. 
\end{Remark}
%Finally, we introduce the notion of computable functions, which essentially describes the effective computation of functions on real numbers.
Finally, observe that a Turing machine TM with an associated mapping $\Psi_{\text{TM}}$ does not necessarily constitute a well-defined algorithm $\mathcal{A}$ with realization $\mathcal{A}_f$ on the real domain, i.e., $\Psi_{\text{TM}}((r_k)_{k \in \N})$ may not represent real number for an admissible input $(r_k)_{k \in \N}$. The notion of computable functions, which essentially describes the effective computation of functions by Turing machines on real numbers, circumvents this issue.
\begin{Definition}
    A function $f: \R^m \to \R^n$ is \textit{Borel-Turing computable}, if there exists a Turing machine that transforms each representation of a vector $x \in \R^m$ into a representation for $f(x)$.
    %A function $f: \R^m \to \R^n$ is \textit{Borel-Turing computable} if there exists a Turing machine (i.e., an algorithm) that computes $f$ effectively.
\end{Definition}
\begin{Remark}
   There exists different notions of computable functions on real numbers. We refer to \cite{AvigadBrattka14CompAnal, Pour-El17Computability, Weihrauch00CompAnal} for an in-depth treatment of the topic and we highlight only the key properties of computable functions that help in providing an intuitive understanding. The need for approximate computations is closely related to the real-valued domain of $f$. For functions operating on the natural or rational numbers, exact computations can in principle be expected. Moreover, Borel-Turing computability can also be applied to complex-valued functions by identifying real and imaginary parts as real numbers. 
\end{Remark}
% \begin{Remark}
%     From a practical point of view, Borel-Turing computability can be seen as a requirement for an algorithmic computation of the input-output relation of a problem (described by a function) on perfect digital hardware. In particular, the associated Turing machine, i.e., the algorithm, takes input representations and determines a sufficient input precision, i.e., suitable elements of the representations, so that the performed computation will terminate with an output within a prescribed worst-case error bound $\varepsilon>0$. Here, the input representation is itself a Turing machine, which can be queried with a precision parameter and provides an approximation of the ``exact'' input. In theory, by iteratively calling the algorithm with a declining sequence of error bounds $\varepsilon_k = 2^{-k}$ for a fixed representation of an input $x\in \R^m$, one would obtain a (computable) representation encoding $f(x)$. In this sense, Borel-Turing computability implies the existence of an algorithm $\mathcal{A}$ realizing a mapping $\mathcal{A}_f: \R^m \to \R^n$, i.e.,  $\mathcal{A}_f = f$.
% \end{Remark}
\begin{Remark}
    Borel-Turing computability implies the existence of an algorithm $\mathcal{A}$ realizing a mapping $\mathcal{A}_f: \R^m \to \R^n$, i.e.,  $\mathcal{A}_f = f$, via the associated Turing machine. From a practical point of view, Borel-Turing computability can be seen as a requirement for an algorithmic computation of the input-output relation of a problem (described by a function) on perfect digital hardware in the following sense. In particular, the associated Turing machine, i.e., the algorithm, takes input representations and determines a sufficient input precision, i.e., suitable elements of the representations, so that the performed computation will terminate once an output within a prescribed worst-case error bound $\varepsilon>0$ is obtained. Here, the input representation is itself a Turing machine, which can be queried with a precision parameter and provides an approximation of the ``exact'' input. In theory, by iteratively calling the algorithm with a declining sequence of error bounds $\varepsilon_k = 2^{-k}$ for a fixed representation of an input $x\in \R^m$, one would obtain a (computable) representation encoding $f(x)$. 
\end{Remark}
\begin{Remark}
    In general, the representation of a computable vector $x$ is not unique. Hence, the representation of a Borel-Turing computable function $f$ at $f(x)$ may depend on the representation of $x$ given as input to the Turing machine. A small wrinkle is added by the fact that not every real number has a description based on recursive functions, i.e., only the computable real numbers (indeed a proper subset of the real numbers) are considered as admissible inputs in the Borel-Turing setting. In contrast, any real number can be approximated by a convergent sequence of rational numbers so that the concept of Borel-Turing computability can be extended to the whole real number domain under certain conditions. For our needs, these subtle differences and their implications can be neglected and we apply the notion of Borel-Turing computability introduced in the definition, but we point to \cite{colbrook21stable} for the treatment of the inverse problem use case under the adapted notion. 
\end{Remark}
Aside from Turing machines further abstractions of digital computations have been established, for instance, the Blum-Shub-Smale (BSS) machines represents a common heuristic formalization (in contrast to the precise model of Turing machines according to the Church-Turing thesis) \cite{Blum98ComplRealComp}. Therefore, BSS machines do not provide a suitable starting point for our intended trustworthiness considerations on digital hardware by design. At the same time, it turns out the BSS framework may be suitable in a different context via their relation to analog hardware. Although there does not exist a widely accepted formalization equivalent to the Turing model for other types than digital hardware, the BSS framework is a candidate to (abstractly) model several forms of analog computing \cite{Grozinger2019Biocomp}. 

\subsubsection{Analog Computations}\label{subsubsec:ana}

%The study of analog hardware gained considerable traction in the last years due to the rapidly increasing demand for resources of digital information processing and computing such as energy and storage \cite{thompson2020computational, Thompson21DLreturn}.
The study of analog hardware gained considerable traction in the last years due to the rapidly increasing demand for energy and storage of digital information processing and computing \cite{thompson2020computational, Thompson21DLreturn}.
Even more, convincing arguments indicate that the scaling of computation and information processing at the current level is not sustainable in the near future if the same (digital) technologies are applied, i.e., a technological disruption based on the introduction of new (analog) approaches is necessary \cite{decadal_plan, computing_plan}. For instance, innovative memory and storage technology as well as novel approaches in world-machine interfaces that can sense, perceive, and reason based on low operational power and latency are required. This may only be realizable by incorporating analog (electronic) components in the (currently mostly digital) computing and information processing pipeline. A prime example is provided by neuromorphic computing and signal processing systems \cite{Esser15Neuromorphic, Smith22Neuromorphic, Maas22Neuromorphic, Markovic20Neuromorphic, Blouw20Neuromorphic, Schuman22Neuro}. 

Neuromorphic systems are inspired by the structure and information processing of biological neural networks and can be realized in analog, digital, and mixed analog-digital fashion \cite{Christensen2022NCSurvey}. %, which is achieved by incorporating real values in form of electrical values, like current and voltages, instead of solely relying on binary numbers. 
Digital computers typically follow the von Neumann architecture \cite{Aspray90JvN} that leads to inherently large time and energy overhead in data transport \cite{Backus78Bottleneck, efnusheva2017survey}. In contrast, neuromorphic computers incorporate emerging concepts such as `in-memory computing', which avoid these bottlenecks by design \cite{Boybat21InMemory,Karunaratne20InMemory, Sebastian20InMemory,Payvand19InMemory}. At present, the main advantage of neuromorphic systems are the expected savings in energy consumption, in particular, by deploying artificial intelligence applications such as deep learning on neuromorphic hardware \cite{Esser15Neuromorphic, Smith22Neuromorphic, Maas22Neuromorphic, Markovic20Neuromorphic, Blouw20Neuromorphic, Papp2021NanoscaleNN}. Further promising (but not yet realizable) analog computing paradigms comprise biocomputing \cite{Grozinger2019Biocomp, Wagenbauer2017DNAassembly, Poirazi2017Dendritic, Wright2022DeepPhysycalNN} and (analog) quantum simulation \cite{Bloch22Quantum, Bloch22Quantum2}.
%Another line of research focuses on biocomputing where living cells act as computing and signal processing platforms which allow analog computations for human-defined operations \cite{Grozinger2019Biocomp}. In recent years, tremendous progress has been made in understanding and implementing analog computing features in biological systems \cite{Wagenbauer2017DNAassembly, Poirazi2017Dendritic, Wright2022DeepPhysycalNN}, however, still many challenges lie ahead. Nevertheless, these examples show that analog information processing hardware, e.g., based on neuromorphic computing processors, may ultimately be within reach.

Moreover, analog computing may also provide benefits with regard to the computability of certain problems. %The computability of applications in various fields has been extensively studied, and indeed showed that important tasks in information theory, signal processing and simulation are not Borel-Turing computable \cite{Elkouss2018MemoryEC, Schaefer2019TuringMS, Boche20SpecFac, Boche20BandlimitedSignals, Boche21PeakValue, PourEl97WaveEq, Boche20LTI}. In contrast, some problems are computable on Blum-Shub-Smale (BSS) machines despite the fact that computability in the Turing model does not hold \cite{Boche2021DoSAttacks, Boche2021DetectDoS, Boche2022RemoteState}. 
Important tasks in information theory, signal processing, and simulation are not Borel-Turing computable \cite{Elkouss2018MemoryEC, Schaefer2019TuringMS, Boche20SpecFac, Boche20BandlimitedSignals, Boche21PeakValue, PourEl97WaveEq, Boche20LTI}. whereas, computability on BSS machines has been established for certain applications \cite{Boche2021DoSAttacks, Boche2021DetectDoS, Boche2022RemoteState}. 
%Computability in BSS sense conveys the same concept as computability on digital devices but under a more general framework. BSS machines carry over important concepts from classical complexity theory in the Turing machine model to computational models over a larger variety of structures by operating on arbitrary rings or fields, even infinite fields such as $\R$ are feasible. The BSS model establishes a mathematical computability and complexity theory on the real numbers in the same fashion as Turing machines on binary number. 
Computability in BSS sense conveys the same concept as computability on digital devices, but under a more general framework. BSS machines carry over complexity theory in the Turing machine model to a larger variety of structures by operating on arbitrary rings or fields, even infinite fields such as $\R$ are feasible. In addition, BSS machines on $\Z_2 = \{ \{0,1\},+,\cdot \}$ recover the theory of Turing machines. Thus, BSS machines are a generalized abstraction of Turing machines that are structure-wise similar. Just as Turing machines, BSS machines operate on an infinite strip of tape according to a program illustrated by a finite directed graph with different types of nodes associated to operations such as input processing, computing, branching and output processing. 
%For every admissible input, the output of a BSS machine is calculated according to the program in a finite number of steps, i.e., the BSS machine finishes its program in finite time and stops. 
For a detailed introduction and comparison, we refer to \cite{Blum98ComplRealComp,Blum04CompoverReals} and the references therein. We only wish to highlight that a BSS machine, which operates on $\R$, processes real numbers (as entities) and performs field operations (`$+$',`$\cdot$') via compute nodes and comparisons (`$<$',`$>$',`$=$') via branch nodes exactly. Thus, BSS machines offer a mathematical framework to investigate analog real number processing and computation. Hereby, BSS computable functions are simply input-output maps of BSS machines, i.e., the set of BSS computable functions precisely characterizes functions that can be computed (in finite time) by algorithmic means in the BSS model.
\begin{Definition} 
    A function $f: \R^m \to \R^n$ is \textit{BSS computable} if there exists a BSS machine with an input-output relation described by $f$.
\end{Definition}
\begin{Remark}
    The output $\Psi_{\mathcal{B}}(x)$ of a BSS machine $\mathcal{B}$ is defined if $\mathcal{B}$ according to its program terminates its calculations on input $x \in \R^m$ after a finite number of steps. The hereby introduced map $\Psi_{\mathcal{B}} : \R^m \to \R^n$ is the input-output function of $\mathcal{B}$, which directly relates to the realization of an algorithm (i.e., BSS machine) introduced in \Cref{def:realAlg}.  
\end{Remark}
\begin{Remark}\label{rm:ComplexRepresentation}
    The definition can be straightforwardly extended to complex functions, however, the representation of the complex field impacts the capabilities of corresponding BSS machines and thereby also the set of computable functions. BSS machines that treat complex numbers as entities can not perform comparisons of arbitrary complex numbers but only check the equality to zero at their branch nodes due to the fact that $\C$ is not an ordered field. As a consequence, elementary complex functions such as $z \mapsto \Re(z)$, $z \mapsto \Im(z)$, $z \mapsto \bar{z}$, and $z \mapsto \abs{z}$ are not BSS computable in this setting \cite{Blum98ComplRealComp}. Identifying $\C$ with $\R^2$ instead and employing BSS machines that take complex inputs $z$ in form of $(\Re(z), \Im(z))$ entails that $z \mapsto \Re(z)$ and $z \mapsto \Im(z)$ are computable since the corresponding BSS machine only needs to process the respective part of the representation of $z$.
\end{Remark}

%A similar line of reasoning can be adopted to BSS computable problems and analog computing devices. In the BSS model the representation of each real number is unique since the representation is the number itself. Thus, any problem that is BSS computable by definition fulfills the transparency condition. In contrast, transparency of an algorithm successfully solving a problem immediately implies BSS computability of the problem.

%\tc{Hier transparency im BSS model erläutern --  selbe Definition wie im Turing Model, aber problematik taucht nicht auf, da Problem und Alg in selber Domäne agieren...}

The crucial property of the BSS framework is the handling of the elements of the associated ring or field as entities, which implies the exact storing and processing of real numbers in this computation model. Thus, the real BSS model can not be implemented on digital hardware. It is even unclear if and to what degree a computing device realizing the real BSS model can be constructed by (future) hardware technology due to physical constraints \cite{Bekenstein81PhysBnd}. For instance, random noise in physical processes, which execute the mathematical operations in a hypothetical computing device, complicates or in the worst-case prevents exact processing and computation. Moreover, the BSS model strongly focuses on algebraic properties, which in turn leads to the non-computability of trigonometric, logarithmic and root functions in the BSS model on the real numbers. These functions are typically considered as elementary functions that are expected to be computable in a practical and useful computing model (as indeed is the case in the Turing model). Therefore, real BSS machines are a strongly idealized model and they may not capture the true capabilities of forthcoming analog computing devices so that theoretical benefits may not turn into practical ones. Nonetheless, the study of BSS computable functions provides to a certain degree an outline on the limits of analog computations. %If a problem, respectively its description as a function, is not BSS computable, then the problem is likely inherently hard and probably can not be solved algorithmically (in the strong sense with correctness guarantee) on any (future) analog computing device 

\subsubsection{Computability and Trustworthiness}
Next, we will analyze under which conditions a trustworthy algorithmic solution of a problem described by an input-output relation, i.e., an associated function, can be expected. Thereby, the crucial step is to establish AgT since it is the basis for various trustworthiness considerations; see \Cref{SubSec:TrAI}. In \Cref{obs:AgT} we derived via \Cref{def:transAlg} a necessary prerequisite to obtain AgT. Applying this framework to the Turing model, we can establish a necessary condition for a transparent algorithm on digital hardware. 
% \begin{Lemma}\label{lemma:transparency}\color{red}
%     Given a problem with an input-output relation described by function $f: \R^m \to \R^n$, an algorithm tackling the problem in the Turing model can only be transparent if the algorithm realizes the same mapping for any representation of the input. In particular, any computed representation of the output for a fixed input encodes the same real number. 
%     Frage: Unterschiedliche Repräsentanten des Output verletzen transparency Bedingung nicht?!
% \end{Lemma}
\begin{Lemma}\label{lemma:transparency}
    Given a problem with an input-output relation described by function $f: \R^m \to \R^n$, let $\mathcal{A}$ be an algorithm implemented on a Turing machine. If the algorithmic implementation is not transparent, then $\mathcal{A}$ does not realize $f$. %In particular, there exists at least one representation of some $x\in\R^m$ such that the corresponding output representation computed by TM does not encode $f(x)$.
\end{Lemma}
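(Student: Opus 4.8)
The plan is to argue by contraposition: I would assume that $\mathcal{A}$ realizes $f$ in the sense of \Cref{def:realAlg}, i.e., that the induced mapping $\mathcal{A}_f:\R^m\to\R^n$ is well defined and satisfies $\mathcal{A}_f=f$, and then deduce that the implementation must be transparent in the Turing model. The guiding observation is that the Turing machine TM underlying $\mathcal{A}$ does not operate on real numbers directly but on their representations, so its behaviour is captured by $\Psi_{\text{TM}}: R\to S$. The realization on the real domain is recovered by decoding an output sequence $\Psi_{\text{TM}}((r_k)_{k\in\N})$ --- where $(r_k)_{k\in\N}$ is a representation of $x$ --- into the real vector it encodes via \Cref{def:CompSeqRat}.

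First I would fix the interface between the two layers. ``Realizing $f$'' presupposes that, for every admissible input, the output sequence produced by $\Psi_{\text{TM}}$ is itself a valid representation of a unique element of $\R^n$; decoding is therefore unambiguous, and one may legitimately set $\mathcal{A}_f(x)$ equal to this decoded value. The decisive structural fact is that $\mathcal{A}_f$ is posited to be a genuine \emph{mapping} $\R^m\to\R^n$, hence single-valued: the output $\mathcal{A}_f(x)=f(x)$ must depend on $x$ alone.

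Next I would read off transparency. Take any $x\in\R^m$ and any two representations $(r_k^1)_{k\in\N}$ and $(r_k^2)_{k\in\N}$ of $x$. Decoding the two outputs $\Psi_{\text{TM}}((r_k^1)_{k\in\N})$ and $\Psi_{\text{TM}}((r_k^2)_{k\in\N})$ yields, by the previous step, two real vectors that both equal the single value $\mathcal{A}_f(x)$; in particular they coincide. This is precisely the requirement that the outputs encode the same outcome in the real problem domain, i.e., transparency. Contrapositively, if the implementation is not transparent, then some input $x$ admits representations whose decoded outputs disagree, forcing $\mathcal{A}_f(x)$ to take two distinct values; such an $\mathcal{A}_f$ is not single-valued, cannot equal the function $f$, and therefore $\mathcal{A}$ does not realize $f$, as claimed.

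The logical skeleton is short, so the main obstacle I anticipate is making the cross-layer dictionary airtight rather than any hard estimate. Specifically, I must justify that ``realizing $f$'' indeed guarantees each output sequence in $S$ is a valid representation (in the sense of \Cref{def:CompSeqRat}) of a unique real vector, so that the decoding map used to define $\mathcal{A}_f$ is itself well defined --- a point the paper flags as genuinely nontrivial, since $\Psi_{\text{TM}}$ need not return a representation for an arbitrary admissible input. I must then identify the informal phrase ``encode the same outcome'' in the transparency definition with the formal statement that these decoded reals agree. Once this correspondence is pinned down, non-transparency translates verbatim into a violation of single-valuedness of $\mathcal{A}_f$, and the lemma follows without further computation.
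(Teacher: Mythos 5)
Your proposal is correct and is essentially the paper's own argument, just run in the contrapositive direction: the paper assumes non-transparency and concludes that at least one output sequence fails to represent $f(x)$, while you assume realization and conclude that decoded outputs must agree on every pair of representations of a given $x$ --- the same single observation (realization forces representation-independence of the encoded output) in both cases. Your extra care about the decoding interface (that realization presupposes each output sequence is a valid representation of a unique real vector) is a fair elaboration of a point the paper leaves implicit, but it does not change the route.
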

\begin{proof}
    Assume the algorithmic implementation of $\mathcal{A}$ on a Turing machine TM is not transparent. Thus, by definition there exist two representations of some input instance $x\in \R^m$ such that the computed output sequences by TM do not encode the same real number. Therefore, at least one of the (computable) output sequences does not represent $f(x)$. Hence, $\mathcal{A}$ does not realize $f$. 
    % \color{red}
    % Let $\mathcal{A}$ be a transparent algorithm. Assume there exist two different representations $(r^1_k)_{k \in \N}, (r^2_k)_{k \in \N}\subset \Q$ of some input $x \in \text{dom}(f)$ such that $\mathcal{A}$ realizes some mapping $f^1_\mathcal{A}: \R^m \to \R^n$ or $f^2_\mathcal{A}: \R^m \to \R^n$ respectively. If $f^1_\mathcal{A}(x) \neq f^2_\mathcal{A}(x)$ by virtue of the different input representations, i.e., the representations $(y^{r^1}_k)_{k \in \N}, (y^{r^2}_k)_{k \in \N} \subset \Q$ of $f^1_\mathcal{A}(x)$ and $f^2_\mathcal{A}(x)$ computed by the algorithm differ, then the realization of the algorithm necessarily depends not only on the given input $x$ but also on the chosen input representation. Hence, the algorithm $\mathcal{A}$ is not transparent.       
\end{proof}
We can immediately infer from \Cref{lemma:transparency} that Borel-Turing non-computability of a function prevents the existence of an algorithm complying with transparency in the digital computing model so that the following statement holds.
\begin{Theorem}\label{thm:EquivTransCompTuring} 
    %A problem with an input-output relation described by a Borel-Turing non-computable function $f: \R^m \to \R^n$ can not be solved by a transparent algorithm in the Turing model.
    %A problem with an input-output relation described by a function $f: \R^m \to \R^n$ can be solved by a transparent algorithm in the Turing model if and only if $f$ is a Borel-Turing computable function. 
    There exists an algorithm $\mathcal{A}$ with transparent implementation in the Turing model realizing $\mathcal{A}_f$ if and only if $f: \R^m \to \R^n$ is Borel-Turing computable.
\end{Theorem}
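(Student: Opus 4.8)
The plan is to prove both implications by unwinding the definitions of realization (\Cref{def:realAlg}), transparency in the Turing model, and Borel-Turing computability, using \Cref{lemma:transparency} as the bridge for one direction. Throughout I would keep in mind that a Turing machine implementing $\mathcal{A}$ is identified with its mapping $\Psi_{\text{TM}}: R \to S$ on representation space, and that the realization $\mathcal{A}_f$ is the induced map on the real domain obtained by decoding the output sequence into the real number it encodes.

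For the direction ``$\Leftarrow$'', I would start from the assumption that $f$ is Borel-Turing computable. By definition there is a Turing machine TM that transforms each representation of any $x \in \R^m$ into a representation of $f(x)$. This TM defines an algorithm $\mathcal{A}$ whose induced real-domain map satisfies $\mathcal{A}_f = f$, so $\mathcal{A}$ realizes $f$ in the sense of \Cref{def:realAlg}. It then remains to certify transparency. Here I would invoke the contrapositive of \Cref{lemma:transparency}: since $\mathcal{A}$ realizes $f$, the implementation cannot fail to be transparent. Alternatively, transparency follows directly, because for every input $x$ all admissible representations are mapped to representations of the single real vector $f(x)$, hence all output sequences encode the same outcome.

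For the direction ``$\Rightarrow$'', I would assume an algorithm $\mathcal{A}$ with transparent implementation realizing $\mathcal{A}_f = f$, associated to a Turing machine with mapping $\Psi_{\text{TM}}$. Transparency guarantees that $\mathcal{A}_f$ is well-defined as a map $\R^m \to \R^n$, since the decoded output depends only on $x$ and not on the chosen representation; and $\mathcal{A}_f = f$ forces this decoded value to be $f(x)$. The key step is to observe that ``$\mathcal{A}$ realizes $f$'' means precisely that, for any representation $(r_k)_{k\in\N}$ of $x$, the computed sequence $\Psi_{\text{TM}}((r_k)_{k\in\N})$ is itself a representation of $f(x)$, i.e.\ a computable rational sequence converging to $f(x)$ with the $2^{-k}$ error control of \Cref{def:CompSeqRat}. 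Since this holds for every representation of every $x$, the machine transforms each representation of $x$ into a representation of $f(x)$, which is exactly Borel-Turing computability of $f$.

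The step I expect to be the main obstacle is the careful reconciliation of the real-domain statement ``$\mathcal{A}$ realizes $f$'' with the operational, representation-level behaviour of the Turing machine: one must make explicit that decoding an output sequence as a real number requires it to be a genuine representation satisfying the error-controlled convergence, and that the non-uniqueness of representations is exactly the ambiguity that transparency neutralizes (restricting attention, as in the preceding remarks, to the computable reals as admissible inputs). Once these two points are pinned down, both implications reduce to a direct comparison of definitions, with no further analytic content.
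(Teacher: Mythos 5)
Your proposal is correct and takes essentially the same route as the paper: the paper derives the theorem ``immediately'' from \Cref{lemma:transparency} together with the definitions, which is exactly your structure --- the contrapositive of the lemma (realization implies transparency) for the ``$\Leftarrow$'' direction, and unwinding ``realizes $f$'' at the representation level (every representation of $x$ is mapped to a genuine, error-controlled representation of $f(x)$) for the ``$\Rightarrow$'' direction. Your write-up is in fact more explicit than the paper's one-sentence justification, and correctly flags the only delicate points (decoding requires the $2^{-k}$ error control of \Cref{def:CompSeqRat}, and admissible inputs are the computable reals).
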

% \begin{proof}
%     First, note that a transparent algorithm must obtain its output independent of the specific representation of its inputs. More accurately, the computed representations of the output for different representations of some input $x \in \text{dom}(f)$ must agree, i.e., they must encode the same real number. Otherwise, the output of the algorithm on input $x$ depends on the specific input representation so that an an additional factor not included in $x$ influences the outcome of the algorithm. However, if $f$ is not Borel-Turing computable, then there does not exist a Turing machine, i.e., an algorithm, that maps each representation of any feasible input $z \in \text{dom}(f)$ onto a representation of $f(z)$. Therefore, there does not exist a transparent and accurate algorithm if the tackled problem $f$ is not Borel-Turing computable.    
% \end{proof}
%%%%%%%%%%%%%%%%%%%% maybe remark as proof more or less? %%%%%%%%%%%%%%%%%%%%%%%%%%%%%%%%%%%%%%%%
\begin{Remark}
    An algorithm adhering to AgT is necessarily transparent so that Borel-Turing computability of the tackled problem is a prerequisite for AgT. In contrast, Borel-Turing non-computability of a problem implies that any algorithmic approach implemented on digital hardware will have unavoidable flaws or at least certain limits: For any algorithm there exists a representation of some input $x\in \R^m$ such that the computed output sequence does not converge at all or does not converge to $f(x)$. Crucially, the integrity between the mathematical model of the problem and the mathematical model of the computing platform is lost so that AgT via \Cref{obs:AgT} can not be guaranteed.
\end{Remark}
% \begin{Observation}
%     A problem with an input-output relation described by a function $f: \R^m \to \R^n$ can only be solved by an algorithm adhering to AgT if $f$ is computable in the considered computing model. 
% \end{Observation}

A similar line of reasoning can be adopted to BSS machines and BSS computable problems. In the BSS model algorithms directly operate on real numbers so that each real number is uniquely represented by itself. Thus, any problem that is BSS computable by definition fulfills the transparency condition. In contrast, transparency of an algorithm successfully solving a problem immediately implies BSS computability of the problem.
\begin{Theorem}\label{thm:EquivTransCompBSS}
    %A problem with an input-output relation described by a function $f: \R^m \to \R^n$ can be solved by a transparent algorithm in the BSS model if and only if $f$ is a BSS computable function.    
    There exists an algorithm $\mathcal{A}$ with transparent implementation in the BSS model realizing $\mathcal{A}_f$ if and only if $f: \R^m \to \R^n$ is BSS computable.
\end{Theorem}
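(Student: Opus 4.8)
The plan is to prove the biconditional in \Cref{thm:EquivTransCompBSS} by establishing each direction separately, mirroring the structure already used implicitly for the Turing case but exploiting the crucial simplification afforded by the BSS model: namely, that a real BSS machine stores and manipulates each real number as a single entity, so that every $x \in \R$ is uniquely ``represented'' by itself. This collapses the representation space $R$ and the real domain $\R$ into one, which is precisely the feature that makes transparency almost automatic in this setting.

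For the direction ($\Leftarrow$), suppose $f : \R^m \to \R^n$ is BSS computable. By definition there exists a BSS machine $\mathcal{B}$ whose input-output relation is described by $f$, i.e.\ the associated map $\Psi_{\mathcal{B}} : \R^m \to \R^n$ satisfies $\Psi_{\mathcal{B}} = f$. I would identify this machine with the algorithm $\mathcal{A}$ and invoke \Cref{def:realAlg}: since $\Psi_{\mathcal{B}}$ directly relates to the realization $\mathcal{A}_f$ and equals $f$, the machine realizes $\mathcal{A}_f = f$. It remains to argue transparency in the sense of \Cref{def:transAlg}, that the realization is not altered by the implementation. Here the key observation is that in the BSS model there is no auxiliary translation between a representation space and the real domain---the input $x$ is processed as an entity, so there is no room for the outcome to depend on a ``choice of representation.'' Thus the realization coincides with $f$ both abstractly and as implemented, and the implementation is transparent.

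For the direction ($\Rightarrow$), suppose $\mathcal{A}$ is an algorithm with a transparent implementation in the BSS model realizing $\mathcal{A}_f$, so that $\mathcal{A}_f = f$ by \Cref{def:realAlg}. Being an algorithm in the BSS model, $\mathcal{A}$ is by definition carried out by some BSS machine $\mathcal{B}$ with input-output function $\Psi_{\mathcal{B}}$; transparency guarantees that this implementation does not alter the realized map, so $\Psi_{\mathcal{B}} = \mathcal{A}_f = f$ on all of $\R^m$. But a function that is the input-output relation of a BSS machine is exactly what it means to be BSS computable. Hence $f$ is BSS computable, completing this direction.

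The main conceptual obstacle---and the point I would be most careful to state cleanly rather than gloss over---is justifying why transparency is essentially vacuous, or rather automatically satisfied, in the BSS model. The subtlety is to make precise the paragraph preceding the statement: because each real number is uniquely represented by itself, the phenomenon that caused the genuine content in \Cref{lemma:transparency} (two distinct representations of the same $x$ yielding incompatible outputs) simply cannot occur. I would therefore emphasize that, unlike in the Turing case where Borel-Turing computability and transparency are logically distinct conditions that happen to coincide, in the BSS setting the equivalence is almost definitional. The only genuine care needed is to ensure that ``algorithm in the BSS model'' is identified with ``BSS machine'' and that the realization $\mathcal{A}_f$ is identified with the input-output map $\Psi_{\mathcal{B}}$, both of which are licensed by \Cref{def:realAlg} and the remark following the definition of BSS computability.
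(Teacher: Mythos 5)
Your proof is correct and takes essentially the same approach as the paper, which justifies \Cref{thm:EquivTransCompBSS} by exactly the observation you make central: in the BSS model each real number is represented by itself, so BSS computability automatically yields a transparent implementation, while transparency of a realizing algorithm (which is by definition a BSS machine with input-output map $\Psi_{\mathcal{B}} = \mathcal{A}_f = f$) immediately gives BSS computability. Your write-up merely makes the paper's two-sentence argument explicit, direction by direction.
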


%and trustworthiness in the strictest sense (including AgT, AgA, RtE) can not be guaranteed since the integrity between the mathematical model of the problem and the mathematical model of the computing platform is lost. A similar line of reasoning can be adopted to BSS computable functions, BSS machines and analog computing devices.
Hence, by studying the computability of a problem or, more accurately, the computability of a function describing the problem in mathematical terms, we can assess the existence of trustworthy algorithms (based on AgT) in a formal manner. Thus, we obtained a precise tool to decide whether trustworthiness in an algorithmic computation can be attained.
%If the considered function is Borel-Turing computable, then a potential algorithm to solve the tackled problem in a trustworthy manner on digital hardware may exist, i.e., abiding the characteristics AgT, AgA and RtE introduced in \Cref{SubSec:TrAI}. 

Besides computability, different techniques to assess trustworthiness in deep learning approaches certainly exist and to decide which one is most suitable for specific circumstances is still an unresolved question \cite{Olah22MechInt, Kaestner23ExplAI,WU22Hitl, Biondi20Certification, Zhang20Certification, Mirman21Certification}; for more details see \Cref{SubSec:Limit}. Yet we will apply the introduced framework to a specific use case of deep learning -- finite dimensional inverse problems --, aiming to derive broadly applicable observations.

\section{Use Case for Trustworthiness Analysis}\label{SubSec:IPandDL}
\subsection{Inverse Problems}
Inverse problems in imaging sciences, i.e., image reconstruction from measurements, is a recurrent task in industrial, scientific, and medical applications such as magnetic resonance imaging (MRI) and X-Ray computed tomography (CT), where the measurements are acquired by the Fourier and Radon transform, respectively.
\begin{Definition}
    An \textit{inverse problem} in the finite-dimensional, underdetermined and linear setting can be formulated as: 
    \begin{equation}\label{eq:problem}
        \text{Given noisy measurements }  y = Ax + e \in \C^m \text{ of } x \in \C^N, \text{ recover } x,
    \end{equation}
    where $A \in \C^{m \times N}, m< N$, is the \textit{sampling operator}, $e \in \C^m$ is a noise vector, $y \in \C^m$ is the \textit{vector of measurements}, and $x \in \C^N$ is the object to recover.   
\end{Definition}
\begin{Remark}
    In the setting of the definition, a typical object to recover is a vectorized discrete image. Furthermore, the underdetermined setting $m<N$ with a limited number of measurements is standard in practice due to time, cost, power, or other constraints.  
\end{Remark}    
Due to the ill-posedness of \eqref{eq:problem} a typical solution strategy is to consider a mathematically more tractable description via an optimization problem. The simplest form is given by the least-squares problem 
\begin{equation*}
    \argmin_{x\in C^N} \norm[\ell_2]{Ax-y}. %\tag{ls}
\end{equation*}
Although the solution map is considerably simpler than the one of the original problem \eqref{eq:problem}, the solution is generally not unique. By adding regularization terms to the optimization problem, one tries to steer the optimization process towards favorable solutions. For instance, sparse solutions tend to possess desirable properties but explicitly enforcing them via the $\ell_0$ norm is typically intractable. However, incorporating regularization terms that promote sparsity in the recovery resulted in various solution techniques \cite{Daubechies04SparseReg, Wright2009SparseRec, Cotter2005SparseSol, Selesnick2017SRviaCA, Candes05DecLP, Candes06RobUnc, Candes06UnivEncStrat, Donoho06CompSens, Ji2008BayesianCS, Duarte2011StructuredCS, Elad2007ProjCS} for common approaches such as (quadratically constrained) \textit{basis pursuit} \cite{Candes06Stable, Chen1998AtomicDec} 
\begin{equation}\label{eq:sparseprob}
    \argmin_{x \in \C^N} \norm[\ell_1]{x} \text{ such that } \norm[\ell_2]{Ax -y} \leq \varepsilon \tag{BP}
\end{equation}
and unconstrained square root \textit{lasso} \cite{Tropp06Relax, Belloni11SquareRootLasso}
\begin{equation}\label{eq:lasso}
    \argmin_{x \in \C^N} \lambda \norm[\ell_1]{x} + \norm[\ell_2]{Ax -y}, \tag{Lasso}
\end{equation}
where the magnitude of $\varepsilon>0$ and $\lambda>0$ controls the relaxation, respectively. 
%The underlying idea is to exploit sparsity in the recovery without explicitly forcing sparse solutions via the $\ell_0$ norm, which is typically intractable.  
In recent years, deep learning techniques led to a paradigm shift and were established as the predominant method to tackle inverse problems \cite{Zu18AutoMap, Arridge2019SolvingIP, Bubba19Shearlet, Yang16MRIDL, Hammernik18MRIDL2, Chen2018LowLightPhoto, Rivenson17DLmicroscopy, Araya18DLtomography}. The core idea behind the deep learning approach is to learn the underlying relations of the reconstruction process based on data samples. %Before focusing on the details, we provide a concise introduction to deep learning. 

\subsection{Deep Learning}

%The increase of computing power and the availability of data combined with the deep learning approach lead to several breakthroughs in various fields \cite{He2015DelvingDI,Silver16Go,Brown20GPT3,Senior20DeepFold}. 
%The basic idea of deep learning is to employ a structure named (artificial) neural network, which is loosely inspired by biological brains, to approximate an unknown function via a set of given input-output value pairs. A neural network is a parameterized mapping $\Phi : \R^d \to \R^k$ that in its simplest form is given by
In deep learning a structure called \textit{(artificial) neural network}, which is loosely inspired by biological brains, is employed to approximate an unknown function via a set of given input-output value pairs. In essence, a neural network is a parameterized mapping with properties and capabilities depending on its specific design \cite{Krizhevsky2012ImageNetCW, He2016ResidualDNNs, vaswani2017selfattention}. The simplest form, which we will focus on in the remainder, are feedforward neural networks.
\begin{Definition}
    A \textit{(feedforward) neural network} $\Phi : \R^d \to \R^k$ is given by
    \begin{equation}\label{eq:NNdef}
        \Phi(x) = T_L \rho(T_{L-1}\rho(\dots \rho(T_1 x))), \quad x\in \R^d,
    \end{equation}
    where $T_{\ell} : \R^{n_{\ell- 1}} \to \R^{n_{\ell}}$, $\ell=1,\dots,L$, are affine-linear maps
    \begin{equation*}
        T_{\ell} x = W_{\ell} x + b_{\ell}, \quad W_{\ell} \in \R^{n_{\ell} \times n_{\ell-1}}, b_{\ell} \in \R^{n_{\ell}} \text{ with } n_0 = d, n_L = k, 
    \end{equation*}
    and $\rho: \R \to \R$ is a non-linear function acting component-wise on a vector. The matrices $W_{\ell}$ are called \textit{weights}, the vectors $b_{\ell}$ \textit{biases}, and the function $\rho$ \textit{activation function}, with a common one being the basic \textit{ReLU activation} $\rho(x) = \max\{0,x\}$.
\end{Definition}
\begin{Remark}
    In addition, a neural network can easily be adapted to work with complex-valued inputs by representing them as real vectors consisting of the real and imaginary parts. 
\end{Remark}
By adjusting the network’s parameters, i.e., its weights and biases, according to an optimization process on available data samples, the network ideally learns to approximate the sought function. This process -- the standard technique is to apply stochastic gradient descent coupled with backpropagation \cite{Rumelhart86BP} -- is usually referred to as the training of a neural network. For an in-depth overview of deep learning, we refer to \cite{LeCun15DL, Goodfellow16DL, Berner2021modernMathDL}.

\subsection{Deep Learning for Inverse Problems}

Turning to inverse problems, deep learning techniques can be incorporated in the solution approach in various ways \cite{Ongie20DLInvProb}. The most fundamental and generally applicable approach is to directly learn a mapping from measurements $y$ to reconstructions $x$ without making any problem specific assumptions. Hence, the goal is to obtain a neural network that for some fixed sampling operator $A \in \C^{m \times N}$ and optimization parameter $\mu > 0$ approximates the reconstruction map
% \begin{align}\label{eq:Xi_Aeps}
%     \Xi_{P,A,\mu}: \C^{m} &\rightrightarrows \C^N \\
%     y &\mapsto P(A,y,\mu), \nonumber 
% \end{align}
\begin{equation}\label{eq:Xi_Aeps}
    \Xi_{P,A,\mu}: \C^{m} \rightrightarrows \C^N,\quad y \mapsto P(A,y,\mu), 
\end{equation}
where $P(A,y,\mu)$ represents the set of minimizers of an optimization problem $P$ given a measurement $y \in \C^m$. For instance,  $P$ is described in \eqref{eq:sparseprob} for basis pursuit with $\mu \coloneqq \varepsilon$. Observe that the reconstruction map is in general set-valued, denoted by `$\rightrightarrows$', since the corresponding optimization problem does not posses a unique solution. Therefore, it is not entirely correct to state that the goal is to compute a neural network that approximates the mapping $\Xi_{P,A,\mu}$. We do not expect a neural network to reproduce all minimizers for a single input but it suffices if the network approximates one specific minimizer. In \Cref{Sec:AS_IP}, we will return to and clarify this issue. 

Ideally, the training process results in a neural network that is capable of solving instances of a particular inverse problem defined by the sampling operator $A$, the optimization problem $P$, and the optimization parameter $\mu$. More powerful would be a neural network that can solve generic inverse problems under specific conditions, e.g., a network that approximates the reconstruction map 
% \begin{align}\label{eq:Xi_mN}
%     \Xi_{P,m,N}: \C^{m\times N} \times \C^m \times \R_{>0} &\rightrightarrows \C^N \\
%     (A,y,\mu) &\mapsto P(A,y,\mu) \nonumber 
% \end{align}
\begin{equation}\label{eq:Xi_mN}
    \Xi_{P,m,N}: \C^{m\times N} \times \C^m \times \R_{>0} \rightrightarrows \C^N,\quad  (A,y,\mu) \mapsto P(A,y,\mu) 
\end{equation}
of any inverse problem of dimension $m\times N$ corresponding to an optimization problem $P$. One can generalize the objective further by allowing the dimension and/or the optimization problem as additional inputs. However, it is a priori not even clear if networks that approximate $\Xi_{P,A,\mu}$ and $\Xi_{P,m,N}$ do exist and can be found via the described deep learning framework. Whereas the former problem can be approached by an expressivity analysis of neural networks, which is already supported by a large body of literature \cite{Cyb89, Hor91, Boelcskei19OptApp, ES16, PMRML17, DDFHP19, Gribonval22AppSpaces, YAROTSKY2017ErrBds}, the latter problem is more intricate. Despite the existence of the sought networks, obtaining them based solely on data samples may not be feasible or the computation process may result in networks with unfavorable properties such as a lack of trustworthiness. Hence, we assess the possibility of computing neural networks that solve inverse problems under the introduced computability framework.

\section{Algorithmic Solvability of Inverse Problems}\label{Sec:AS_IP}
It is intuitively clear that the training and inference of neural networks are distinct problems with varying difficulty. For inverse problems the goal of the training process is to obtain a neural network, which approximates the mapping from measurements to the original data. In other words, one is interested in finding a neural network, i.e., suitable weights and biases, that realizes the mapping in \eqref{eq:Xi_Aeps} or \eqref{eq:Xi_mN}. We focus on the more general case \eqref{eq:Xi_mN}, where the associated optimization problem is given by \eqref{eq:sparseprob} or \eqref{eq:lasso}. Before turning to the question if the desired network can be computed algorithmically, we wish to remark that once obtained the execution of said network on a given input can be performed reliably.
\begin{Theorem}[\cite{Boche2022LimitsDL,Boche2022InvProb}]\label{thm:inference}
    A neural network $\Phi$ as defined in \eqref{eq:NNdef} is a Turing \!\slash BSS computable function given that the activation function $\rho: \R \to \R$ is Turing \!\slash BSS computable.    
\end{Theorem}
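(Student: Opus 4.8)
The plan is to exploit the fact that $\Phi$ is, by its very definition in \eqref{eq:NNdef}, a finite composition of two elementary types of maps: the affine-linear layers $T_\ell x = W_\ell x + b_\ell$ and the component-wise application of the activation $\rho$. Since both the Turing (Borel-Turing) and the BSS notions of computability are closed under composition, it suffices to verify that each of these two building blocks is computable in the respective model and then to chain finitely many of them together. I would therefore first establish the two building blocks separately and only afterwards assemble $\Phi$.

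For the affine-linear layers, I would observe that evaluating $W_\ell x + b_\ell$ amounts to finitely many real additions and multiplications (the matrix-vector product is a sum of products, followed by the addition of the bias). In the BSS model these are exactly the field operations carried out by compute nodes, and the entries of $W_\ell, b_\ell$ may be stored as arbitrary real machine constants; hence each $T_\ell$ is BSS computable without further assumptions. In the Turing model the same operations are Borel-Turing computable by the standard results of computable analysis, provided the weights and biases are computable reals so that they admit representations; under this (implicit) assumption each $T_\ell$ is Borel-Turing computable. For the activation step I would use the hypothesis directly: since $\rho\colon \R \to \R$ is computable in the respective model, applying it to each coordinate of a vector is computable as well, as one simply runs the machine for $\rho$ on each component in turn, which preserves computability in both models.

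Having the two building blocks, I would invoke closure under composition. In the BSS model this is immediate: one concatenates the finitely many programs, feeding the output entities of one stage directly as the input entities of the next, and since every stage terminates on all of its (real) input, so does the composite; thus $\Phi$ is BSS computable. In the Turing model the composition is again computable, but here lies the only genuinely technical point, namely the backward propagation of precision. To produce $\Phi(x)$ within a prescribed error bound $2^{-k}$, one must determine how accurately the intermediate quantities, and ultimately the input $x$, need to be known; this relies on the error-control information accompanying each Borel-Turing computable building block and must be threaded through the $L$ layers from the output back to the input. I expect this error bookkeeping across the layers to be the main obstacle in the Turing case, whereas the BSS case is free of it precisely because real numbers are processed exactly as entities.
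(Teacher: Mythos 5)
Your proposal is correct and follows essentially the same route as the paper, whose proof sketch simply notes that under the given conditions $\Phi$ is a composition of computable functions in both computing models. The details you add --- closure under composition, the implicit requirement that the weights $W_\ell$ and biases $b_\ell$ be computable reals in the Turing case, and the backward propagation of precision through the $L$ layers --- are precisely the standard facts that the paper's one-line sketch leaves implicit.
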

\begin{proof}[Proof Sketch]
    This follows from the fact that under the given conditions $\Phi$ is a composition of computable functions in both computing models.
\end{proof}
\begin{Remark}
    Note that we can extend the observation in the theorem to more advanced architectures such as convolutional networks \cite{lecun1995convolutional}. In particular, the statement holds for any network architecture that is composed of basic computable building blocks, which indeed is true for many common variants. Furthermore, standard activations applied in practice such as ReLU are indeed computable.
\end{Remark} 
%\Cref{thm:inference} implies that the latter task is not problematic from the viewpoint of trustworthy algorithmic computations on Turing machines as well as BSS machines. 
Does a similar statement hold for the training phase? First, note that the mapping \eqref{eq:Xi_mN} targeted by the training process is in general multi-valued since the solution of the associated optimization problem does not need to be unique. Thus, it neither fits in the introduced deep learning nor computability framework. However, we can circumvent this issue by establishing suitable single-valued functions that fit in both frameworks. The underlying idea is that typically one is not interested in the whole solution set, but one particular element of the solution set or even an element reasonably close to the solution set suffices. Hence, of interest is not to compute the entire set described by the map $\Xi_{P,m,N}$ for given input $(A,y,\mu)$ in \eqref{eq:Xi_mN} but to compute one element of $\Xi_{P,m,N}(A,y,\mu)$, i.e., exactly one minimizer of the optimization problem $P$. In particular, it is not relevant which of the (possibly infinitely many) minimizers is obtained, since any of those is an appropriate solution.

Formally, this concept can be captured by single-valued restrictions of a multi-valued function $f:\mathcal{V} \rightrightarrows \mathcal{Z}$: For each input $v \in \text{dom}(f)$ there exists at least one element $z_v \in f(v) \subset \mathcal{P}(\mathcal{Z})$ so that the map
\begin{align*}
    f^s: \mathcal{V} &\to \mathcal{Z}, \quad v \mapsto z_v    
\end{align*}
is well-defined. We denote by $\mathcal{M}_f$ the set of all the single-valued functions associated with the multi-valued function $f$, i.e., all single-valued functions $f^s$ that are formed by restricting the output of a multi-valued map $f$ to a single value for each input. 
\begin{Definition}\label{def:AlgSolv}
    A problem with an input-output relation described by a multi-valued function $f:\mathcal{X} \rightrightarrows \mathcal{Y}$ is \textit{algorithmically solvable on a BSS or Turing machine} if there exists a function $f^s \in \mathcal{M}_f$ that is computable on a BSS or Turing machine, respectively. 
\end{Definition}
By applying the notion of algorithmic solvability, we reduced our task to evaluate the computability of well-defined single-valued functions. It may be the case that most functions in $\mathcal{M}_f$ are non-computable. However, as long as there exists at least one computable function among them the task is deemed algorithmically solvable.
\begin{Remark}
    The presented approach is not the only viable option to assess computability of a multi-valued mapping $f$. The (non-)existence of algorithms can also be established via the distance to the solution set measured by an appropriate metric. Therefore, a hypothetical algorithm does not approximate a fixed element of the solution as the admissible distance to the solution set varies, which is the case in our approach via single-valued restrictions. Thus, a problem characterized by $f$ may not be algorithmically solvable in the introduced sense, but an algorithm obeying this distance description may still exist. Note that this case only arises if $f$ is indeed a multi-valued and not a single-valued function. Moreover, the notion of algorithmic transparency needs to be adjusted to cover 'the distance to the solution set' approach. We refer to \cite{colbrook21stable} for more details, where algorithmic solvability via this notion is pursued in the inverse problem setting. 
\end{Remark}
Finally, we can apply the framework of algorithmic solvability to inverse problems described via basis pursuit and square root lasso. Indeed, we find differences in algorithmic solvability in the Turing and BSS model as we now detail.

\subsection{Algorithmic Non-Solvability of Inverse Problems in Turing Model}
In the Turing setting, for a certain range of optimization parameters we not only establish algorithmic non-solvability but even non-approximability.
\begin{Theorem}[\cite{Boche2022LimitsDL}]\label{thm:nonApprox}
    Consider the optimization problems \eqref{eq:sparseprob}, \eqref{eq:lasso} and the associated mappings $\Xi_{\text{BP},m,N}(\cdot,\cdot,\varepsilon)$ and $\Xi_{\text{Lasso},m,N}(\cdot,\cdot,\lambda)$, where $N \geq 2$ and $m < N$, for fixed parameters $\varepsilon \in (0,\sfrac{1}{4})$ and $\lambda \in (0,\sfrac{5}{4}) \cap \Q$, respectively. The problems described by $\Xi_{\text{BP},m,N}(\cdot,\cdot,\varepsilon)$ and $\Xi_{\text{Lasso},m,N}(\cdot,\cdot,\lambda)$ are not algorithmically solvable on Turing machines.
\end{Theorem}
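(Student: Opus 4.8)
The plan is to prove non-solvability by a reduction from a recursively enumerable but non-recursive set, exploiting the fact that the $\ell_1$ minimizers of \eqref{eq:sparseprob} and \eqref{eq:lasso} can jump discontinuously between two well-separated configurations under arbitrarily small, computably controlled perturbations of the input. Since \Cref{def:AlgSolv} only requires \emph{one} computable single-valued selection $f^s \in \mathcal{M}_{\Xi}$ to exist, I must rule out \emph{every} such selection at once; the strategy is therefore to build inputs on which any selection is forced to distinguish two separated minimizers, and to argue that performing this distinction effectively would decide an undecidable set.

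First I would reduce the dimension. It suffices to exhibit the phenomenon for the smallest admissible case $m=1$, $N=2$ and then embed it into arbitrary $N \geq 2$, $m < N$ by padding the sampling operator and the measurement vector with zero blocks, so that the reconstruction map restricted to the two active coordinates reproduces the hard low-dimensional instance while the extra coordinates contribute $0$ to both the $\ell_1$ objective and the residual and hence stay inactive in any minimizer. This localizes all the analysis to a two-dimensional toy problem.

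Next I would construct a two-parameter computable family of rational inputs $(A, y_{n,j})_{n,j \in \N}$ with explicit error control $\norm[\ell_2]{y_{n,j} - y_n} \leq 2^{-j}$, engineered so that the limit problem $\Xi_{\text{BP},1,2}(A, y_n, \varepsilon)$ (respectively its Lasso analogue) has a \emph{unique} minimizer located at one of two fixed points $z^{(0)}, z^{(1)}$ with $\norm[\ell_2]{z^{(0)} - z^{(1)}} \geq \kappa$ for a constant $\kappa > 0$ independent of $n$, and such that which point is selected encodes membership of $n$ in a fixed non-recursive set $\Omega$. The encoding uses a recursive enumeration of $\Omega$: the sequence $(y_{n,j})_j$ keeps the instance in the configuration forcing $z^{(0)}$ until, if ever, the enumeration reveals $n \in \Omega$, after which it switches to the configuration forcing $z^{(1)}$. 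Verifying that these two configurations are genuinely the unique minimizers and remain $\kappa$-separated is exactly where the parameter restrictions $\varepsilon \in (0,\sfrac{1}{4})$ and $\lambda \in (0,\sfrac{5}{4}) \cap \Q$ enter: they are chosen so that the intended sparse solution is optimal (active constraint and correct sign pattern) via the optimality conditions of the respective $\ell_1$ problem.

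Finally I would close the argument by contradiction. Suppose some $f^s \in \mathcal{M}_{\Xi}$ for $\Xi_{\text{BP},m,N}(\cdot,\cdot,\varepsilon)$ were Turing computable; then there is a Turing machine that, given any representation of $(A, y_n, \varepsilon)$ and a target accuracy, returns a rational vector within that accuracy of $f^s(A,y_n,\varepsilon)$. Supplying it the computable representation of $(A, y_n, \varepsilon)$ obtained from the family by dovetailing and requesting accuracy $\kappa/3$, its output would reveal whether the minimizer equals $z^{(0)}$ or $z^{(1)}$, hence decide $n \in \Omega$, contradicting the non-recursiveness of $\Omega$; the same scheme applies verbatim to \eqref{eq:lasso}. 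I expect the main obstacle to be the construction of the preceding paragraph: producing an explicit computable bifurcating family whose two limiting minimizers are provably unique and uniformly separated for both optimization problems simultaneously within the prescribed parameter windows. This demands a careful case analysis of the KKT conditions for $\ell_1$ minimization, whereas the surrounding computability bookkeeping (error-controlled enumeration, dovetailing, the embedding) is routine.
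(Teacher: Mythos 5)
Your high-level reduction scheme --- encode a recursively enumerable non-recursive set $\Omega$ into a computable, error-controlled family of inputs and use a hypothetical computable selection to decide membership --- is indeed the scheme behind the paper's proof. The gap is in the bifurcating family itself: as you have specified it, it provably does not exist, so the KKT case analysis you defer to the end cannot succeed; the obstacle is not bookkeeping but an impossibility. You hold the sampling operator $A$ fixed, perturb only $y$, and require the limit problems to have \emph{unique} minimizers at two $\kappa$-separated points $z^{(0)},z^{(1)}$. Your error control $\norm[\ell_2]{y_{n,j}-y_n}\leq 2^{-j}$ forces the switch target reached when $n$ enters $\Omega$ at stage $j_0$ to lie within $2^{-j_0+1}$ of the default limit $y^{(0)}$, so you need inputs arbitrarily close to $y^{(0)}$ whose unique minimizer stays $\kappa$-far from the unique minimizer $z^{(0)}$ at $y^{(0)}$. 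This contradicts the stability in $y$ of both problems. For \eqref{eq:lasso} with any fixed $A$: the objectives $F_y(x)=\lambda\norm[\ell_1]{x}+\norm[\ell_2]{Ax-y}$ satisfy $\sup_x \abs{F_{y'}(x)-F_y(x)}\leq \norm[\ell_2]{y'-y}$, so minimizers of nearby problems are uniformly bounded and every accumulation point minimizes $F_{y^{(0)}}$, hence equals $z^{(0)}$ when that minimizer is unique. For \eqref{eq:sparseprob} with fixed full-row-rank $A$ (for $m=1$ this means any $A\neq 0$): if $z_j$ minimizes the problem with data $y_j\to y^{(0)}$, then $z^{(0)}+w_j$ with $Aw_j=y_j-y^{(0)}$ and $\norm[\ell_2]{w_j}=O(\norm[\ell_2]{y_j-y^{(0)}})$ is feasible for $y_j$, giving $\limsup_j\norm[\ell_1]{z_j}\leq\norm[\ell_1]{z^{(0)}}$; any accumulation point of $(z_j)$ is feasible for $y^{(0)}$ with $\ell_1$ norm at most $\norm[\ell_1]{z^{(0)}}$, hence equals $z^{(0)}$ by uniqueness. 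In both cases $z_j\to z^{(0)}$: arbitrarily small perturbations of $y$ alone can never move a unique minimizer by a fixed distance $\kappa$.

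The construction consistent with the paper's sketch (and carried out in \cite{Boche2022LimitsDL}) therefore perturbs the \emph{matrix}, and accepts a multi-valued limit. Two computable sequences of inputs approach a common degenerate limit from two directions; e.g.\ for \eqref{eq:sparseprob} with $m=1$, $N=2$, $y=1$: the inputs $A_k=(1,\,1-2^{-k})$ have unique minimizer $(1-\varepsilon,0)$, the inputs $A_k'=(1-2^{-k},\,1)$ have unique minimizer $(0,1-\varepsilon)$, and both converge to $A^*=(1,1)$, where the solution set is the entire segment joining these two points. (The analogue for \eqref{eq:lasso} uses matrix entries crossing the threshold $\lambda$, which is precisely why $\lambda\in\Q$ is assumed: the degenerate input must itself be computable.) The semicontinuity argument above shows this is unavoidable: the limit input's solution set must contain points near both clusters, so it cannot be a singleton. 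Consequently your final decision step also needs repair, since for $n\notin\Omega$ the input is the degenerate one and a selection $f^s$ is free to output a point near $z^{(1)}$ from the segment. The repair is to run the reduction twice with the roles of the two sequences exchanged: the hypothetical computable selection takes some fixed value at the degenerate input, that value is at distance at least $\kappa/2$ from at least one of the two cluster points, and the corresponding one of the two reductions then decides $\Omega$ --- the contradiction only requires that a deciding machine exists, not that we know which of the two it is. (Equivalently, one can argue that Borel-Turing computable functions are continuous, while every single-valued selection of the solution map has distinct limits along the two sequences.)
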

\begin{proof}[Proof Sketch]
    The main step is to characterize algorithmic non-solvability conditions for functions $\Xi_{P,m,N}(\cdot,\cdot,\mu)$, introduced in \eqref{eq:Xi_mN} based on the solution set of the optimization problem $P$ with optimization parameter $\mu$. The idea is to `encode' a recursively enumerable but non-recursive set $B \subset \N$ in the domain of $\Xi_{P,m,N}(\cdot,\cdot,\mu)$, i.e., a set such that there exists a Turing machine that takes numbers $n\in\N$ as input and confirms (after a finite amount of time) that $n$ is an element of $B$, if $n\in B$ does indeed hold, but fails to decide whether $n\in B$ or $n\in B^c$ in general. To that end, a computable sequence $(\xi_n)_{n\in\N} \subset \Xi_{P,m,N}(\cdot,\cdot,\mu)$ is constructed so that $\{\Xi_{P,m,N}(\xi_n,\mu) \,:\, n\in B\}$ can be distinguished from $\{\Xi_{P,m,N}(\xi_n,\mu) \,:\, n\in B^c\}$ by algorithmic means provided that $\Xi_{P,m,N}(\cdot,\cdot,\mu)$ is Borel-Turing computable. However, one can construct a Turing machine that decides $n\in B$ or $n\in B^c$ for arbitrary $n\in \N$, contradicting the non-recursiveness of $B$. Subsequently, the construction can be verified for the considered functions $\Xi_{\text{BP},m,N}(\cdot,\cdot,\varepsilon)$ and $\Xi_{\text{Lasso},m,N}(\cdot,\cdot,\lambda)$.  
\end{proof}
\begin{Remark}\label{rm:TM_AA}
    The statement in the theorem can be strengthen by providing a lower bound on the achievable algorithmic approximability of the problem, i.e., how precise single-valued restrictions of the sought reconstruction map can be approximated by Borel-Turing computable functions. Note that the limitations do not arise due to the unboundedness of the input domain, but hold true on a compact input set. Furthermore, algorithmic non-solvability is not connected to poor conditioning of the inverse problem instances; one can construct input domains with well-conditioned instances that suffer from the same limitations. For details we refer to \cite{Boche2022LimitsDL}. 
\end{Remark}
By invoking \Cref{thm:EquivTransCompTuring}, we infer that no transparent algorithms to solve inverse problems exist on Turing machines.
\begin{Corollary}
    In the setting of \Cref{thm:nonApprox}, there does not exist a transparent algorithm solving inverse problems described by $\Xi_{\text{BP},m,N}(\cdot,\cdot,\varepsilon)$ and $\Xi_{\text{Lasso},m,N}(\cdot,\cdot,\lambda)$.
\end{Corollary}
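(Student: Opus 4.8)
The plan is to argue by contradiction, chaining together \Cref{thm:nonApprox}, \Cref{def:AlgSolv}, and \Cref{thm:EquivTransCompTuring}. Since the reconstruction maps $\Xi_{\text{BP},m,N}(\cdot,\cdot,\varepsilon)$ and $\Xi_{\text{Lasso},m,N}(\cdot,\cdot,\lambda)$ are set-valued, I first need to fix what it means for a transparent algorithm to \emph{solve} the inverse problem: in line with \Cref{def:AlgSolv}, a solving algorithm $\mathcal{A}$ realizes some single-valued restriction $f^s \in \mathcal{M}_{\Xi}$ of the multi-valued map, i.e.\ on every admissible input $(A,y,\mu)$ it returns exactly one genuine minimizer of the associated optimization problem. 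The complex-valued domains and codomains cause no difficulty, since we identify $\C$ with $\R^2$ componentwise, as noted in the earlier remarks, so that $f^s$ is a map between real Euclidean spaces and the notions of transparency and Borel-Turing computability apply verbatim.

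Assuming such a transparent solving algorithm $\mathcal{A}$ exists, I would invoke \Cref{thm:EquivTransCompTuring}: the existence of an algorithm with transparent implementation in the Turing model realizing the single-valued map $f^s$ forces $f^s$ to be Borel-Turing computable. But a Borel-Turing computable restriction $f^s \in \mathcal{M}_{\Xi}$ is exactly what \Cref{def:AlgSolv} requires for the underlying multi-valued problem to be \emph{algorithmically solvable on a Turing machine}. This directly contradicts \Cref{thm:nonApprox}, which asserts non-solvability in the stated parameter ranges $\varepsilon \in (0,\sfrac{1}{4})$ and $\lambda \in (0,\sfrac{5}{4}) \cap \Q$. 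Hence no transparent solving algorithm can exist, which is the claim.

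The only genuinely delicate point---hence the part I would spell out most carefully---is the passage between the multi-valued target $\Xi$ and the single-valued objects to which \Cref{thm:EquivTransCompTuring} literally applies. That theorem is phrased for a function $f:\R^m\to\R^n$, whereas the problem here is a relation. The reconciliation is precisely the definition of algorithmic solvability: a transparent algorithm produces, for each input, a single well-defined output, so whatever it realizes is by construction a single-valued map, and for it to count as \emph{solving} the inverse problem this map must land in the solution set, i.e.\ belong to $\mathcal{M}_{\Xi}$. Once this identification is made, the argument is a one-line combination of the two theorems, with no further estimates required; the substantive mathematical content already resides in \Cref{thm:nonApprox}.
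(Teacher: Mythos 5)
Your proposal is correct and follows essentially the same route as the paper: the paper's (implicit) proof likewise combines \Cref{thm:nonApprox} with the equivalence in \Cref{thm:EquivTransCompTuring}, using the single-valued restrictions $f^s \in \mathcal{M}_{\Xi}$ from \Cref{def:AlgSolv} as the bridge between the set-valued reconstruction map and the function-level statement of the equivalence. Your explicit attention to the multi-valued-to-single-valued passage and the identification of $\C$ with $\R^2$ spells out details the paper leaves implicit, but introduces no new idea beyond the paper's argument.
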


\subsection{Algorithmic Solvability of Inverse Problems in BSS Model}
In the BSS setting, a general algorithmic non-solvability statement does not hold. We indeed can establish algorithmic solvability under specific circumstances. However, to do so we need to distinguish between a real and complex domain since BSS machines show distinct behaviour depending on the underlying structure. 

\subsubsection{Real Case}
First, we consider the real case. Given a multi-valued mapping $f:\mathcal{V} \rightrightarrows \mathcal{Z}$, we denote by $f^\R$ its restriction to real inputs and outputs. Although only the complex domain was studied explicitly in \Cref{thm:nonApprox}, the algorithmic non-solvability in the Turing model remains valid for basis pursuit \eqref{eq:sparseprob} described by the mapping $\mathcal{M}_{\Xi^\R_{\text{BP},m,N}(\cdot,\cdot,\varepsilon)}$ since the proof idea translates to the strictly real case. In contrast, the same problem is algorithmically solvable in the BSS model.   
\begin{Theorem}[\cite{Boche2022InvProb}]\label{thm:BSSReal}
    %There exists a BSS computable function $\Xi^s \in \mathcal{M}_{\Xi^\R_{\text{BP},m,N}}$.
    Consider the optimization problem \eqref{eq:sparseprob} restricted to the real domain and the associated mapping $\Xi^\R_{\text{BP},m,N}$. The problem described by $\Xi^\R_{\text{BP},m,N}$ is algorithmically solvable on BSS machines.
\end{Theorem}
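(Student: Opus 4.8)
The plan is to exhibit, per \Cref{def:AlgSolv}, a single element $f^s \in \mathcal{M}_{\Xi^\R_{\mathrm{BP},m,N}}$ together with a BSS machine computing it, i.e.\ a rule that on input $(A,y,\varepsilon) \in \R^{m\times N}\times\R^m\times\R_{>0}$ returns one minimizer of \eqref{eq:sparseprob} restricted to the reals. Before anything else I would strip the problem of the two operations that are not BSS-admissible. The constraint is equivalent to the polynomial inequality $\norm[\ell_2]{Ax-y}^2 \le \varepsilon^2$, which uses only additions, multiplications and a single comparison and can therefore be tested at a branch node without ever forming a square root. The objective is handled by the standard lift $t_i \ge x_i$, $t_i \ge -x_i$ with linear cost $\sum_i t_i$ (equivalently, $\abs{x_i}=\max\{x_i,-x_i\}$ is decided by a branch node). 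This recasts \eqref{eq:sparseprob} as a quadratically constrained linear program whose data are semialgebraic and whose feasible set, on the relevant part of the domain, is a nonempty compact convex semialgebraic set, so a minimizer exists.

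The decisive point — and the reason the BSS verdict differs from the Turing verdict of \Cref{thm:nonApprox} — is that branch nodes decide equalities and inequalities of reals \emph{exactly}. The Turing obstruction encodes a recursively enumerable, non-recursive set into the input domain and exploits the impossibility of deciding exact (in)equalities from finite-precision representations; over the BSS model this mechanism collapses, since the required sign and equality tests are now performed exactly. Concretely, I would organize the computation as a finite case distinction over the combinatorial configurations of an optimal point: the support and sign pattern of $x$, the set of active linear constraints, and whether the quadratic constraint is tight. For each configuration the first-order optimality (KKT) conditions form a system of polynomial equations and inequalities in $x$ and the associated multipliers; a BSS machine can set up this system, and the exact branch nodes serve to certify which configuration is active, to reject infeasible or non-optimal ones, and finally to compare the exactly computable objective values $\sum_i \abs{x_i}$ across the surviving candidates and output the minimizer. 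Since there are only finitely many configurations and each step is a finite sequence of field operations and exact comparisons, the whole procedure is realized by a single BSS machine and the resulting selection lies in $\mathcal{M}_{\Xi^\R_{\mathrm{BP},m,N}}$.

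The hard part will be the configurations in which the quadratic constraint is tight: solving the corresponding KKT system then amounts to intersecting an affine family $x = p + q/\nu$ with the sphere $\norm[\ell_2]{Ax-y}^2 = \varepsilon^2$, which is a scalar quadratic in the multiplier $\nu$ and thus threatens to force a square root that is \emph{not} BSS-admissible. The crux is therefore to show that a minimizer can nevertheless be selected within the field operations and exact branching allowed by the BSS model — either by ruling out, via the exact comparisons, every configuration whose optimum would be irrational in the data, or by arguing from the structure of basis pursuit that on each semialgebraic cell of the input domain some minimizer is a rational function of $(A,y,\varepsilon)$. Reconciling this tight-constraint geometry with the purely algebraic capabilities of BSS machines is where the real work lies; once it is in place, \Cref{thm:EquivTransCompBSS} additionally upgrades the constructed algorithm to a transparent one, matching the stated conclusion, and I would follow \cite{Boche2022InvProb} for the detailed verification of this step.
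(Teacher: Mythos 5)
Your opening move --- replacing the constraint by the polynomial inequality $\norm[\ell_2]{Ax-y}^2\le\varepsilon^2$ and lifting the $\ell_1$ objective so that the whole problem becomes polynomial data over a semialgebraic set --- is exactly the first sentence of the paper's proof sketch. But the second halves diverge, and yours contains a genuine gap that you yourself flag. The paper does \emph{not} attempt an explicit KKT/sign-pattern enumeration; having rewritten the problem, it invokes an \emph{established algorithm} for finding a minimizer of a polynomial on a semialgebraic set (quantifier-elimination-type machinery from real algebraic geometry, executable on a BSS machine because all sign tests are exact), with the verification deferred to \cite{Boche2022InvProb}. Your plan instead reduces \Cref{thm:BSSReal} to precisely the statement you concede you cannot prove --- that the tight-constraint KKT systems can be resolved using only field operations and exact branching --- and then defers that to \cite{Boche2022InvProb}. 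That deferral is not a proof step; it is the entire difficulty of the theorem.

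Worse, neither of the two escape routes you sketch can work. Tightness cannot be ``ruled out via exact comparisons'': whenever $\norm[\ell_2]{y}>\varepsilon$, \emph{every} minimizer has an active constraint (a minimizer with slack constraint would be an unconstrained local minimizer of the $\ell_1$ norm, forcing $x=0$, which is then infeasible), so the tight case is the generic one, not a degenerate one. And minimizers need not be rational in the data on cells: already a support-one minimizer $x^*=(c,0,\dots,0)$ with active constraint has $c$ equal to the smaller root of the quadratic $\norm[\ell_2]{c\,a_1-y}^2=\varepsilon^2$, whose discriminant $\langle a_1,y\rangle^2-\norm[\ell_2]{a_1}^2(\norm[\ell_2]{y}^2-\varepsilon^2)$ is not a perfect square; one can exhibit open sets of inputs $(A,y,\varepsilon)$ (e.g.\ $m=2$, $N=3$, $y$ near $(1,0)$, $\varepsilon=\sfrac{1}{2}$, columns $a_2,a_3$ nearly orthogonal to $a_1$) on which the subdifferential conditions hold strictly and this point is the \emph{unique} minimizer. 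On such a set the selection in $\mathcal{M}_{\Xi^\R_{\text{BP},m,N}}$ is forced, and it is a genuine square-root function of the input, not a (piecewise) rational one --- so it can never emerge from the finitely many field operations along a halting path of your case distinction. In other words, the obstruction you correctly sensed is robust: no argument that tries to \emph{write down} the minimizer in closed form within BSS arithmetic can succeed, which is exactly why the paper's proof never enters this territory and instead leans entirely on the cited semialgebraic optimization algorithm. Any completion of your proposal has to abandon the explicit KKT formulas and explain, as \cite{Boche2022InvProb} must, how a minimizer is located and output without being expressed through radicals.
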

\begin{proof}[Proof Sketch]
     The approach is to rewrite the problem such that an established algorithm for finding a minimizer of a polynomial on a semialgebraic set can be applied. Due to the restriction to the real domain, the involved terms can indeed be transferred to the required setting.
\end{proof}
\begin{Remark}
    Note that the optimization parameter acts as an additional input to the mapping $\Xi^\R_{\text{BP},m,N}$ (whereas in the Turing setting in \Cref{thm:nonApprox} the optimization parameter was fixed beforehand). Therefore, in the BSS setting we proved the existence of an even stronger algorithm (with an additional input parameter) than the one assessed in \Cref{thm:nonApprox}. For more details and the proof we refer to \cite{Boche2022InvProb}.      
\end{Remark}
\begin{Remark}
    Despite the existence of an algorithm solving the problem $\Xi^\R_{\text{BP},m,N}$, its computational complexity may remain inappropriately high. Hence, the theorem only provides a theoretical existence result neglecting the question of practical implementation.
\end{Remark}
In case of (square root) lasso optimization \eqref{eq:lasso}, algorithmic solvability can not be established on BSS machines, not even on the restricted real domain. The underlying issue is the BSS non-computability of the square root function on the real numbers, which arises due to the algebraic structure of the BSS model \cite{Blum98ComplRealComp}. Hence, the $\ell_2$ norm is not BSS computable, which renders (square root) lasso optimization infeasible on BSS machines. Note that the explicit computation of the $\ell_2$ norm can be avoided for basis pursuit, because it arises only in the description of the constraint, whereas for (square root) lasso it is directly incorporated in the objective. Once can circumvent this problem by assuming that BSS machines possess an additional module that can be called to compute the square root. This is motivated by the fact that the square root is an elementary function that should be computable in a practical model as is the case for Turing machines \cite{Pour-El17Computability}. Without this additional assumption, we either need to modify or approximate the objective of the corresponding optimization problem to obtain algorithmic solvability on BSS machines. We will consider the former approach and return to the latter approach in the complex setting. Instead of square root lasso, we can consider lasso optimization \cite{Belloni11SquareRootLasso, Tropp06Relax, Lv2011GroupLasso} given by 
\begin{equation}\label{eq:lassoSq}
    \argmin_{x \in \C^N} \lambda \norm[\ell_1]{x} + \norm[\ell_2]{Ax -y}^2. \tag{$\text{Lasso}^2$}
\end{equation}
Here, the objective does not require the computation of the square root function (due to the squaring of the $\ell_2$ norm) and, indeed, this change is sufficient to establish algorithmic solvability via the same approach as in \Cref{thm:BSSReal}.
\begin{Theorem}[\cite{Boche2022InvProb}]\label{thm:BSSreal2}
    %There exists a BSS computable function $\Xi^s \in \mathcal{M}_{\Xi^\R_{\text{BP},m,N}}$.
    Consider the optimization problem \eqref{eq:lassoSq} restricted to the real domain and the associated mapping $\Xi^\R_{\text{Lasso}^2,m,N}$. The problem described by $\Xi^\R_{\text{Lasso}^2,m,N}$ is algorithmically solvable on BSS machines.
\end{Theorem}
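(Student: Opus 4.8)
The plan is to observe that, once restricted to the real domain and with the $\ell_2$ term squared, the objective of \eqref{eq:lassoSq} becomes a genuine polynomial, so the whole task falls into the class of minimizing a polynomial over a semialgebraic set — exactly the setting underlying \Cref{thm:BSSReal}. By \Cref{def:AlgSolv} it suffices to produce a single BSS-computable selection $f^s \in \mathcal{M}_{\Xi^\R_{\text{Lasso}^2,m,N}}$; I need not recover the entire (possibly non-unique) solution set. First I would note that for real data $(A,y,\lambda)$ the squared residual $\norm[\ell_2]{Ax-y}^2 = \sum_{j=1}^m\bigl(\sum_{k=1}^N A_{jk}x_k - y_j\bigr)^2$ is already a degree-two polynomial in the entries of $x$, with coefficients polynomial in $A$ and $y$; this is precisely why the squaring is essential, since it removes the root that obstructs BSS computability of the $\ell_2$ norm in \eqref{eq:lasso}. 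To absorb the remaining $\ell_1$ term into a polynomial objective over a semialgebraic set, I would introduce slack variables $t\in\R^N$ and rewrite the problem as
\begin{equation*}
\min_{(x,t)\in\R^{2N}}\ \lambda\sum_{i=1}^N t_i + \sum_{j=1}^m\Bigl(\sum_{k=1}^N A_{jk}x_k - y_j\Bigr)^2 \quad\text{subject to}\quad t_i \ge x_i,\ t_i \ge -x_i \ \text{ for all } i,
\end{equation*}
so that at any minimizer $t_i=|x_i|$ and the $x$-component minimizes \eqref{eq:lassoSq}. The objective is now polynomial and the feasible region is semialgebraic, cut out by the linear inequalities above; coercivity of $\lambda\norm[\ell_1]{x}$ for $\lambda>0$ guarantees the infimum is attained, so a minimizer exists.

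Next I would invoke the established BSS-computable procedure for minimizing a polynomial over a semialgebraic set — the same machinery transferred to the real setting in \Cref{thm:BSSReal} — run with $(A,y,\lambda)$ supplied as real entities. Since this procedure uses only field operations (`$+$',`$\cdot$') and order comparisons (`$<$',`$>$',`$=$'), which are exactly the operations a real BSS machine performs exactly, it returns a point $(x^\ast,t^\ast)$ attaining the minimum. Projecting onto the $x$-coordinate yields one minimizer of \eqref{eq:lassoSq}, and fixing the canonical output of the procedure defines a single-valued map $(A,y,\lambda)\mapsto x^\ast$ that by construction lies in $\mathcal{M}_{\Xi^\R_{\text{Lasso}^2,m,N}}$ and is BSS computable, establishing algorithmic solvability.

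The main obstacle I anticipate is not the polynomial reformulation, which is routine, but the \emph{uniformity and single-valuedness} of the selection: the minimization routine must operate parametrically in $(A,y,\lambda)$ and return a well-defined, computable point even when the minimizer is non-unique, never invoking a non-BSS operation such as a root along the way. Verifying that the cited optimization algorithm, as transferred to the real BSS model in \Cref{thm:BSSReal}, indeed produces such a deterministic selection under these parametric conditions — and that the slack-variable lifting faithfully preserves the correct $x$-minimizer for every admissible input — is the delicate point; everything else follows once the squared objective is recognized as polynomial.
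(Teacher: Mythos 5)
Your proposal follows essentially the same route as the paper: the paper's proof of this theorem is precisely that squaring the $\ell_2$ term removes the square root obstruction, so the problem can be rewritten as minimizing a polynomial over a semialgebraic set and handed to the established BSS procedure, exactly as in \Cref{thm:BSSReal}. Your explicit slack-variable lifting of the $\ell_1$ term, the coercivity argument for attainment, and the caveat about obtaining a single BSS-computable selection are correct elaborations of the details the paper (and its citation) leaves implicit.
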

\begin{Remark} 
    The analogous proof technique as for the square root lasso minimization problem in \Cref{thm:nonApprox} can be applied to derive algorithmic non-solvability of lasso minimization \eqref{eq:lassoSq} on Turing machines \cite{Boche2022InvProb}.
\end{Remark}

Applying \Cref{thm:EquivTransCompBSS}, we conclude that a transparent algorithm to solve real inverse problems does exist on BSS machines.
\begin{Corollary}
    In the setting of \Cref{thm:BSSReal} and \Cref{thm:BSSreal2}, there does exist a transparent algorithm solving inverse problems described by $\Xi^\R_{\text{BP},m,N}$ and $\Xi^\R_{\text{Lasso}^2,m,N}$, respectively.
\end{Corollary}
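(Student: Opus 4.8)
The plan is to combine the two algorithmic solvability results with the characterization of transparency in the BSS model, so that the corollary follows as a direct consequence with essentially no additional technical work. First I would unpack what algorithmic solvability delivers: by \Cref{def:AlgSolv}, the assertion of \Cref{thm:BSSReal} that $\Xi^\R_{\text{BP},m,N}$ is algorithmically solvable on BSS machines means precisely that there exists a single-valued restriction $f^s_{\text{BP}} \in \mathcal{M}_{\Xi^\R_{\text{BP},m,N}}$ which is BSS computable. Analogously, \Cref{thm:BSSreal2} yields a BSS-computable single-valued restriction $f^s_{\text{Lasso}^2} \in \mathcal{M}_{\Xi^\R_{\text{Lasso}^2,m,N}}$.

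The second and final step is to feed each of these BSS-computable functions into \Cref{thm:EquivTransCompBSS}. Since $f^s_{\text{BP}}$ and $f^s_{\text{Lasso}^2}$ are single-valued functions on a finite-dimensional real domain --- after flattening the matrix, measurement, and parameter arguments into a single real input vector and the reconstruction into a real output vector in $\R^N$ --- the equivalence applies directly: each BSS-computable map is realized by some algorithm $\mathcal{A}$ whose implementation in the BSS model is transparent. This is exactly the claimed existence of a transparent algorithm solving the respective inverse problem.

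I expect no genuine obstacle here, as the content is entirely absorbed by the cited results; the only point requiring care is conceptual rather than computational. One must state explicitly that \emph{solving} the inverse problem is understood in the sense of \Cref{def:AlgSolv}, i.e., realizing one admissible minimizer via a single-valued restriction $f^s \in \mathcal{M}_f$, rather than reproducing the full set-valued reconstruction map $\Xi^\R_{\text{BP},m,N}$ (respectively $\Xi^\R_{\text{Lasso}^2,m,N}$). With that reading fixed, the transparent algorithm guaranteed by \Cref{thm:EquivTransCompBSS} realizes $f^s$ exactly, consistent with \Cref{def:realAlg}; and, because in the BSS model each real number is represented by itself, no representation-dependence can corrupt the output --- which is the structural reason why BSS computability and transparency coincide and why the corollary is immediate.
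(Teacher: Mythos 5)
Your proposal is correct and follows exactly the paper's route: unpack algorithmic solvability (\Cref{thm:BSSReal}, \Cref{thm:BSSreal2}) via \Cref{def:AlgSolv} into the existence of a BSS-computable single-valued restriction, then apply the equivalence of \Cref{thm:EquivTransCompBSS} to obtain a transparent algorithm. Your explicit remark that ``solving'' must be read as realizing one single-valued restriction $f^s \in \mathcal{M}_f$ rather than the full set-valued map is precisely the reading the paper intends, so there is nothing to add.
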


\subsubsection{Complex Case}
For BSS machines operating on the complex numbers, we have to choose a suitable representation. As described in \Cref{rm:ComplexRepresentation}, considering complex numbers as entities results in the non-computability of elementary complex functions. In contrast, identifying $\C$ with $\R^2$ and representing complex inputs $z$ in form of $(\Re(z), \Im(z))$ circumvents this problem to a certain degree. However, even in the $\R^2$-representation $\ell_p$ norms are in general not computable functions, since they require the computation of a square root (which is not a real BSS computable function). Similarly to the real case, we can introduce adjusted optimization problems that promote solutions with similar properties as the original problems but allow for algorithmic solvability in the BSS model. For details, we refer to \cite{Boche2022InvProb} and wish to mention that algorithmic solvability in the Turing model is still not achievable for the adjusted problems. Although the adaptation of the objectives try to maintain the original structural properties, they are not derived by rigorous reasoning.   

Instead of replacing the optimization problem, we can also approximate its objective. This approach is demonstrated for basis pursuit by establishing an adequate (BSS computable) approximation of the $\ell_1$ norm.
\begin{Theorem}[\cite{Boche2022InvProb}]\label{thm:BPapprox}
    Let $\beta, \gamma > 0$. For $A\in\C^{m\times N}$, $y\in \C^m$ and $\varepsilon >0$ consider the optimization problem 
    \begin{equation*}\label{eq:approxbp}
        \argmin_{x \in I_\beta} p_{\beta, \gamma}(x)  \text{ such that }   \norm[\ell_2]{Ax -y} \leq \varepsilon, % \,\, \wedge \,\, \norm[\ell_2]{x} < \sqrt{N} \beta,  
        \tag{BP-A}
    \end{equation*}
    where $p_{\beta, \gamma}$ is a polynomial satisfying 
    \begin{equation*}\label{eq:approxl1}
        \sup_{x \in I_\beta}\abs{\norm[\ell_1]{x} - p_{\beta, \gamma}(x)} \leq \gamma
    \end{equation*}
    and 
    \begin{equation*}
        I_\beta \coloneqq \{x \in \C^N: \norm[\ell_2]{x} < \sqrt{N} \beta\}    
    \end{equation*}
    Then, the problem described by $\Xi_{\text{BP-A},m,N}$ is algorithmically solvable in the BSS model.
\end{Theorem}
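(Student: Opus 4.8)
The plan is to mirror the strategy behind \Cref{thm:BSSReal}: recast \eqref{eq:approxbp} as the minimization of a polynomial over a bounded basic semialgebraic set and then invoke the established BSS routine that computes a minimizer of a polynomial over such a set. The point that makes this work is that, by hypothesis, the objective $p_{\beta,\gamma}$ is already a polynomial, while the two defining constraints become polynomial inequalities once the $\ell_2$ norms are squared. Hence no genuine square root is ever evaluated, which is exactly the obstruction (via the non-BSS-computability of $z \mapsto \abs{z}$ discussed in \Cref{rm:ComplexRepresentation}) that prevents a direct treatment of the unmodified $\ell_1$/$\ell_2$ formulation in the complex BSS model.

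First I would pass to the real representation by identifying $\C^N$ with $\R^{2N}$ via $z \mapsto (\Re(z),\Im(z))$, as in \Cref{rm:ComplexRepresentation}, and likewise split $A$ and $y$ into real and imaginary parts. Under this identification $p_{\beta,\gamma}$ becomes a polynomial in $2N$ real variables whose coefficients arise from the entries of $A$ and $y$ by field operations, hence are BSS computable. The measurement constraint $\norm[\ell_2]{Ax-y}\leq\varepsilon$ is equivalent to the polynomial inequality $\norm[\ell_2]{Ax-y}^2\leq\varepsilon^2$, and membership in $I_\beta$ is equivalent to $\norm[\ell_2]{x}^2 < N\beta^2$; both left-hand sides are polynomials in the real variables. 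Unlike the real basis pursuit case, where an epigraph reformulation is needed to make the $\ell_1$ objective polynomial, here the reduction is immediate. Thus the feasible set is a bounded basic semialgebraic set described entirely by BSS-computable data, and minimizing $p_{\beta,\gamma}$ over it is precisely a polynomial optimization problem of the type handled in \Cref{thm:BSSReal}.

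Next I would apply that routine to obtain a feasible point attaining the minimum; assigning to each admissible input $(A,y,\varepsilon)$ this computed minimizer yields a single-valued selection $\Xi^s \in \mathcal{M}_{\Xi_{\text{BP-A},m,N}}$ that is BSS computable. By \Cref{def:AlgSolv} this is exactly what algorithmic solvability in the BSS model demands, so the claim follows (and, if desired, the existence of a transparent algorithm is then immediate from \Cref{thm:EquivTransCompBSS}).

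The main obstacle I expect concerns the strict inequality defining $I_\beta$: a minimizer over the open feasible set need not be attained if the infimum is approached on the sphere $\norm[\ell_2]{x}^2 = N\beta^2$, whereas the semialgebraic optimization algorithm operates most naturally on closed, compact sets. I would resolve this by running the algorithm on the closed ball $\norm[\ell_2]{x}^2 \leq N\beta^2$, where compactness of the intersection with the $\varepsilon$-constraint guarantees attainment, and then noting that the approximation bound $\sup_{x\in I_\beta}\abs{\norm[\ell_1]{x}-p_{\beta,\gamma}(x)}\leq\gamma$ extends to the closure by continuity, so the computed point still solves the intended problem. A secondary point to verify is nonemptiness of the feasible set for the inputs under consideration, since otherwise the selection is undefined; this is handled by restricting to inputs for which $\Xi_{\text{BP-A},m,N}$ admits a solution.
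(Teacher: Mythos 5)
Your reduction is the same one the paper uses: identify $\C^N$ with $\R^{2N}$, square the $\ell_2$ constraints so that the feasible set becomes semialgebraic, and hand the minimization of the polynomial objective to the routine underlying \Cref{thm:BSSReal}. However, the paper's proof has a first step that you discard: it \emph{constructs} $p_{\beta,\gamma}$, via the constructive (Bernstein-polynomial) proof of the Weierstrass approximation theorem, thereby showing that for every $\beta,\gamma>0$ a BSS-computable polynomial with the stated approximation property on $I_\beta$ actually exists. Reading the approximation bound purely as a hypothesis is permitted by the wording of the statement, but it leaves the theorem conditional on an object whose existence is precisely what the Bernstein argument supplies; in the paper this construction is the new ingredient, the remainder being ``along the same lines as'' \Cref{thm:BSSReal}. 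Your side remark that the coefficients of $p_{\beta,\gamma}$ ``arise from the entries of $A$ and $y$'' is also incorrect: $p_{\beta,\gamma}$ approximates the $\ell_1$ norm and is independent of the data $(A,y,\varepsilon)$; its coefficients enter as fixed machine constants, which the BSS model permits.

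The genuine gap is your treatment of the strict inequality. If you run the semialgebraic optimizer over the closed ball $\norm[\ell_2]{x}^2\le N\beta^2$, then on inputs where \eqref{eq:approxbp} has a solution the optimal values over the open and the closed feasible sets do coincide (the open feasible set, being a nonempty intersection of an open convex ball with a closed convex constraint set, is dense in the closed one), but the point the routine returns may be a minimizer lying on the sphere $\norm[\ell_2]{x}=\sqrt{N}\beta$, hence outside $I_\beta$. Such a point is infeasible for \eqref{eq:approxbp}, so the selection you define need not belong to $\mathcal{M}_{\Xi_{\text{BP-A},m,N}}$, and algorithmic solvability in the sense of \Cref{def:AlgSolv} does not follow; continuity of the approximation bound up to the closure is beside the point, because the failure is one of feasibility, not of approximation quality. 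The repair is not to close the set: strict polynomial inequalities are admissible in semialgebraic descriptions and the BSS decision/quantifier-elimination machinery handles them, so you can keep $\norm[\ell_2]{x}^2<N\beta^2$ in the constraint list, attainment being guaranteed by restricting, as you already do, to inputs in the domain of $\Xi_{\text{BP-A},m,N}$. Alternatively, first compute the optimal value $v$ over the closed set and then ask the decision procedure for a point satisfying $p_{\beta,\gamma}(x)=v$ together with the strict ball constraint; compare the paper's remark following the theorem, where the BSS machine $\mathcal{B}_{\beta,\gamma}$ first decides whether the relevant solutions lie in $I_\beta$ and only then computes one.
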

\begin{proof}[Proof Sketch]
    Applying the Weierstrass approximation theorem, in particular its constructive proof via Bernstein polynomials, we can derive a (BSS computable) polynomial $p_{\beta, \gamma}$ approximating the $\ell_1$ norm up to an error of $\gamma$ on $I_\beta$. Finally, the algorithmic solvability of $\Xi_{\text{BP-A},m,N}$ follows along the same lines as in the proof of \Cref{thm:BSSReal}. 
\end{proof}
\begin{Remark}
    The objective $p_{\beta, \gamma}$ in \eqref{eq:approxbp} approximates up to an error of $\gamma$ the $\ell_1$ norm, i.e., the objective of basis pursuit optimization, on the set $I_\beta$. In this sense, \eqref{eq:approxbp} represents an approximation of basis pursuit if its minimizers are contained in $I_\beta$. Additionally, one can construct a BSS computable function that decides for input $(A,y,\varepsilon)$ if basis pursuit \eqref{eq:sparseprob} has at least one solution and if the solution(s) are contained in $I_\beta$. Therefore, there does exist a BSS machine $\mathcal{B}_{\beta,\gamma}$ that checks if the solutions of basis pursuit for $(A,y,\varepsilon)$ are contained in $I_\beta$. If the answer is positive, a solution of \eqref{eq:approxbp} is computed consecutively. Otherwise the computation is aborted since the approximation accuracy $\gamma$ and acceptance domain depending on $\beta$ can not be adjusted autonomously. In other words, for each pair of parameters $(\beta, \gamma)$ a distinct BSS machine $\mathcal{B}_{\beta,\gamma}$ needs to be constructed. Moreover, note that the obtained minimizers of \eqref{eq:approxbp} and basis pursuit need not to agree and we do not obtain worst-case bounds on their distance, for details we refer to \cite{Boche2022InvProb}.  
\end{Remark}
\begin{Remark}
    In the Turing model, the outlined approach to approximate basis pursuit is not feasible \cite{Boche2022InvProb}. Even more, due to lower bounds on the algorithmic approximability in the Turing model (see \Cref{rm:TM_AA}), different approximation schemes necessarily have certain limits in this setting.
\end{Remark}
Finally, via \Cref{thm:EquivTransCompBSS} we can state a similar result with respect to trustworthiness as in the real case.
\begin{Corollary}
    In the setting of \Cref{thm:BPapprox}, there does exist a transparent algorithm approximating inverse problems described by $\Xi_{\text{BP-A},m,N}$.
\end{Corollary}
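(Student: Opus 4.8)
The plan is to chain the computability result just established with the equivalence between BSS computability and transparency, exactly as was done for the real-case corollary following \Cref{thm:BSSReal} and \Cref{thm:BSSreal2}. First I would invoke \Cref{thm:BPapprox}, which states that the multi-valued mapping $\Xi_{\text{BP-A},m,N}$ is algorithmically solvable in the BSS model. Unpacking \Cref{def:AlgSolv}, algorithmic solvability means precisely that there exists a single-valued restriction $f^s \in \mathcal{M}_{\Xi_{\text{BP-A},m,N}}$ that is BSS computable; concretely, $f^s$ selects, for each admissible input $(A,y,\varepsilon)$, one minimizer of the approximate problem \eqref{eq:approxbp}, and this selection is realized by a BSS machine on its domain of admissible inputs.

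Next I would apply \Cref{thm:EquivTransCompBSS} to the single-valued function $f^s$. After identifying $\C$ with $\R^2$ as in \Cref{rm:ComplexRepresentation}, $f^s$ is a function between real Euclidean spaces of the appropriate dimension, so the hypotheses of \Cref{thm:EquivTransCompBSS} are met. The relevant implication of that theorem --- that BSS computability of a function yields a transparent algorithm realizing it --- then furnishes an algorithm $\mathcal{A}$ whose implementation in the BSS model is transparent and whose realization satisfies $\mathcal{A}_{f^s} = f^s$. Since $f^s$ is by construction a valid single-valued solution of the approximate basis pursuit problem, $\mathcal{A}$ is precisely a transparent algorithm approximating the inverse problem described by $\Xi_{\text{BP-A},m,N}$, which is the claim.

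There is essentially no analytic obstacle left to overcome: the substantive content --- constructing the Bernstein-polynomial surrogate $p_{\beta,\gamma}$ for the $\ell_1$ norm and verifying BSS solvability --- was already discharged in \Cref{thm:BPapprox}, and \Cref{thm:EquivTransCompBSS} is available off the shelf. The only point deserving a sentence of care, and hence the most delicate step, is the passage through the $\C \cong \R^2$ representation: I would verify that the complex inputs $(A,y,\varepsilon)$ are presented to the BSS machine componentwise via their real and imaginary parts, matching the representation under which \Cref{thm:BPapprox} was established, so that BSS computability transfers without alteration. With this remark in place the corollary follows immediately.
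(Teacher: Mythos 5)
Your proposal is correct and follows exactly the route the paper intends: chaining \Cref{thm:BPapprox} through \Cref{def:AlgSolv} to extract a BSS computable single-valued restriction, then applying \Cref{thm:EquivTransCompBSS} to obtain the transparent algorithm, just as in the real-case corollaries. Your additional care regarding the $\C \cong \R^2$ representation is consistent with \Cref{rm:ComplexRepresentation} and does not change the argument.
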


\subsection{Comparison of Results in Turing and BSS model}

The presented findings indicate that the degree of algorithmic solvability of inverse problems depends on both the considered problem description and computing model. In the Turing model, there exists a rather general algorithmic non-solvability statement, also supported by the results in \cite{colbrook21stable}, whereas the landscape is more diverse in the BSS setting. Here, the potential of algorithmic solvability is connected to the specific properties of the underlying optimization problem, which can to a certain extent be positively influenced by modifying or approximating the objective. Although the adjustments may typically not be applied in practice, they maintain the properties of the original formulation to some degree and show that a wide range of inverse problem description is in principle algorithmically solvable in the BSS model.  

In contrast, related approaches appear to not be feasible in the Turing model. On the one hand, (reasonable) modifications of the objectives of the optimization problems do not influence algorithmic solvability, since algorithmic non-solvability is related to properties of the underlying solution set of the given problem, which pertain a broad class of inverse problem descriptions. On the other hand, algorithmic non-approximability in the Turing model also renders approximate approaches impractical. Characterizing classes of inputs that allow for algorithmic solvability and thereby identifying problematic inputs, which violate performance guarantees, could potentially alleviate the non-computability issue. However, it was found that implementing an exit-flag functionality, i.e., aborting the computation and notifying the user once a `problematic' input is recognized, on Turing machines is not feasible for inverse problems in general \cite{bastounis21extended}.  

Consequently, we can observe a gap in algorithmic solvability between the Turing and the BSS model. In particular, BSS machines provide a greater capacity to solve inverse problems algorithmically. However, the  
power of the BSS model is strictly connected to the (exact) representation and processing of real numbers as entities. If the same approximating sequences as in the Turing model are used to represent real numbers, then essentially the same limitations as in the Turing model arise in the BSS model as well \cite{colbrook21stable}. It should also be noted that algorithmic solvability of inverse problems was assessed in a very general framework, i.e., we did not consider a specific but a broad class of inverse problems. Hence, by restricting to a more narrow framework consisting of a limited number of problems the degree of algorithmic solvability may change. Thereby, specific properties of the considered problems could be exploited, which at the same time can not be incorporated in a more universal approach. Thus, a trade-off between generality and trustworthiness expressed through AgT may not be avoidable in our framework, but the degree may vary with the underlying computing paradigm.

\section*{Acknowledgements}
This work of H. Boche was supported in part by the German Federal Ministry of Education and Research (BMBF) within the national initiative on 6G Communication Systems through the Research Hub 6G-life under Grant 16KISK002.

This work of Gitta Kutyniok was supported in part by the Konrad Zuse School of Excellence in Reliable AI (DAAD), the Munich Center for Machine Learning (BMBF) as well as the German Research Foundation under Grants DFG-SPP-2298, KU 1446/31-1 and KU 1446/32-1. Furthermore, G. Kutyniok acknowledges support from LMUexcellent, funded by the Federal Ministry of Education and Research (BMBF) and the Free State of Bavaria under the Excellence Strategy of the Federal Government and the Länder as well as by the Hightech Agenda Bavaria.

% \textcolor{red}{Proof sketches einfügen}

% \textcolor{red}{Idee: Auch erwahnen, dass neue Hardware auch neue Sofware bedingt, die Hardware optimal nutzt. Kann man das in unsere Story mit einbinden? Evtl in weiteren Verlauf.}

%  ...novel computing technologies such as analog, quantum, and neuromorphic that may require fundamental changes to algorithms.”

%        The emerging highly multicore, heterogeneous, and energy-efficient architectures promise a revolution in computing capabilities to solve challenging scientific and technological problems. It is also critical to explore new software approaches that will enable us to effectively exploit this hardware to advance science.

\printbibliography

\end{document}